\newtheorem{definition}{Definition}
\newtheorem{lemma}{Lemma}
\newcommand{\E}{\mathbb{E}}
\def \xx {{\bm{x}}}
\def \pp {{\bm{p}}}
\def \qq {{\bm{q}}}
\def \zz {{\bm{z}}}
\def \btheta {{\bm{\theta}}}
\def \X  {\mathcal{X}}
\def \Y  {\mathcal{Y}}
\def \L  {\mathcal{L}}
\def \D  {\mathcal{D}}
\DeclareMathOperator*{\argmin}{arg\,min}
\icmltitlerunning{Normalized Loss Functions for Deep Learning with Noisy Labels}
\begin{document}

\twocolumn[
\icmltitle{Normalized Loss Functions for Deep Learning with Noisy Labels}



\icmlsetsymbol{equal}{*}

\begin{icmlauthorlist}
\icmlauthor{Xingjun Ma}{equal,uom}
\icmlauthor{Hanxun Huang}{equal,uom}
\icmlauthor{Yisen Wang}{sjtu}
\icmlauthor{Simone Romano}{}
\icmlauthor{Sarah Erfani}{uom}
\icmlauthor{James Bailey}{uom}
\end{icmlauthorlist}

\icmlaffiliation{uom}{The University of Melbourne, Australia}
\icmlaffiliation{sjtu}{Shanghai Jiao Tong University, China}

\icmlcorrespondingauthor{Yisen Wang}{eewangyisen@gmail.com}

\icmlkeywords{Deep Learning, Robust Loss, Noisy Labels}

\vskip 0.3in
]



\printAffiliationsAndNotice{\icmlEqualContribution} 

\begin{abstract}
Robust loss functions are essential for training accurate deep neural networks (DNNs) in the presence of noisy (incorrect) labels. It has been shown that the commonly used Cross Entropy (CE) loss is not robust to noisy labels. Whilst new loss functions have been designed, they are only partially robust. In this paper, we theoretically show by applying a simple normalization that: \emph{any loss can be made robust to noisy labels}. However, in practice, simply being robust is not sufficient for a loss function to train accurate DNNs. By investigating several robust loss functions, we find that they suffer from a problem of {\em underfitting}. To address this, we propose a framework to build robust loss functions called \emph{Active Passive Loss} (APL). APL combines two robust loss functions that mutually boost each other. Experiments on benchmark datasets demonstrate that the family of new loss functions created by our APL framework can consistently outperform state-of-the-art methods by large margins, especially under large noise rates such as 60\% or 80\% incorrect labels.
\end{abstract}

\section{Introduction}
\label{sec:intro}
Training accurate deep neural networks (DNNs) in the presence of noisy (incorrect) labels is of great practical importance. Different approaches have been proposed for robust learning with noisy labels. This includes 1) label correction methods that aim to identify and correct wrong labels \cite{xiao2015learning,vahdat2017toward,veit2017learning,li2017learning}; 2) loss correction methods that correct the loss function based on an estimated noise transition matrix \cite{sukhbaatar2014training,reed2014training,patrini2017making,han2018masking}; 3) refined training strategies that modify the training procedure to be more adaptive to incorrect labels \cite{jiang2018mentornet,wang2018iterative,tanaka2018joint,ma2018dimensionality,han2018co}; and 4) robust loss functions that are inherently tolerant to noisy labels \cite{ghosh2017robust,zhang2018generalized,wang2019symmetric}. Compared to the first three approaches that may suffer from inaccurate noise estimation or involve sophisticated training procedure modifications, robust loss functions provide a simpler solution, which is also the main focus of this paper.

It has been theoretically shown that some loss functions such as Mean Absolute Error (MAE) are robust to label noise, while others are not, which unfortunately includes the commonly used Cross Entropy (CE) loss. 
This has motivated a body of work to design new loss functions that are inherently robust to noisy labels.
For example, Generalized Cross Entropy (GCE) \cite{zhang2018generalized} was proposed to improve the robustness of CE against noisy labels. GCE can be seen as a generalized mixture of CE and MAE, and is only robust when reduced to the MAE loss. Recently, a Symmetric Cross Entropy (SCE) \cite{wang2019symmetric} loss was suggested as a robustly boosted version of CE. SCE combines the CE loss with a Reverse Cross Entropy (RCE) loss, and only the RCE term is robust.
Whilst these loss functions have demonstrated improved robustness, theoretically, they are only partially robust to noisy labels.

Different from previous works, in this paper, 
we theoretically show that any loss can be made robust to noisy labels, and all is needed is a simple normalization.
However, in practice, simply being robust is not enough for a loss function to train accurate DNNs. By investigating several robust loss functions, we find that they all suffer from an underfitting problem. Inspired by recent developments in this field, we propose to characterize existing loss functions into two types: 1) ``Active" loss, which only explicitly maximizes the probability of being in the labeled class, and 2) ``Passive" loss, which also explicitly minimizes the probabilities of being in other classes. Based on this characterization, we further propose a novel framework to build a new set of robust loss functions called \emph{Active Passive Losses} (APLs).
We show that under this framework, existing loss functions can be reworked to achieve the state-of-the-art for training DNNs with noisy labels. Our key contributions are:
\begin{itemize}[leftmargin= *]
  \item  We provide new theoretical insights into robust loss functions demonstrating that a simple normalization can make any loss function robust to noisy labels.
  
  \item We identify that existing robust loss functions suffer from an underfitting problem. To address this, we propose a generic framework \emph{Active Passive Loss} (APL) to build new loss functions with theoretically guaranteed robustness and sufficient learning properties.
 
  \item We empirically demonstrate that the family of new loss functions created following our \emph{APL} framework can outperform the state-of-the-art methods by considerable margins, especially under large noise rates of 60\% or 80\%.
\end{itemize}

\section{Related Work}
\label{sec:related}
We briefly review existing approaches for robust learning with noisy labels.

\noindent\textbf{1) Label correction methods.}
The idea of label correction is to improve the quality of the raw labels, possibly correcting wrong labels into correct ones. One common approach is to apply corrections via a clean label inference step using complex noise models characterized by directed graphical models \cite{xiao2015learning}, conditional random fields \cite{vahdat2017toward}, neural networks \cite{lee2017cleannet,veit2017learning} or knowledge graphs \cite{li2017learning}. These methods require support from extra clean data or a potentially expensive detection process to estimate the noise model. 

\noindent\textbf{2) Loss correction methods.}
This approach improves robustness by modifying the loss function during training, based on label-dependent weights \cite{natarajan2013learning} or an estimated noise transition matrix that defines the probability of mislabeling one class with another \cite{han2018masking}. Backward and Forward \cite{patrini2017making} are two noise transition matrix based loss correction methods.
Work in \cite{goldberger2016training,sukhbaatar2014training} augments the correction architecture by adding a linear layer on top of the neural network. Bootstrap \cite{reed2014training}  uses a combination of raw labels and their predicted labels. Label Smoothing Regularization (LSR) \cite{szegedy2016rethinking,pereyra2017regularizing} uses soft labels in place of one-hot labels to alleviate overfitting to noisy labels. Loss correction methods are sensitive to the noise transition matrix. Given that ground-truth is not always available, this matrix is typically difficult to estimate.

\noindent\textbf{3) Refined training strategies.}
This direction designs adaptive training strategies that are more robust to noisy labels. MentorNet \cite{jiang2018mentornet,yu2019does} supervises the training of a StudentNet by a learned sample weighting scheme in favor of probably correct labels. 
SeCoST extends MentorNet to a cascade of student-teacher pairs via a knowledge transfer method \cite{kumar2019secost}.
Decoupling training strategy \cite{malach2017decoupling} trains two networks simultaneously, and parameters are updated when their predictions disagree. 
Co-teaching \cite{han2018co} allows one network learn from the other network's most confident samples. These studies all require an auxiliary network for sample weighting or learning supervision. D2L \cite{ma2018dimensionality} uses subspace dimensionality adapted labels for learning, paired with a training process monitor. 
The joint optimization framework~\cite{tanaka2018joint} updates DNN parameters and labels alternately.
\citet{kim2019nlnl} use complementary labels to mitigate overfitting to original labels. \citet{xu2019l_dmi} introduce a Determinant-based Mutual Information (DMI) loss for robust fine-tuning of a CE pre-trained model. These methods either rely on complex interventions into the learning process which are hard to adapt and tune, or are sensitive to hyperparameters like training epochs and learning rate.

\noindent\textbf{4) Robust loss functions.} 
Compared to the above three types of methods, robust loss functions are a simpler and arguably more generic solution for robust learning.
Previous work has theoretically proved that some loss functions such as Mean Absolute Error (MAE) are robust to noisy labels, while others like the commonly used Cross Entropy (CE) loss are not \cite{ghosh2017robust}.
However, training with MAE has been found very challenging due to slow convergence caused by gradient saturation \cite{zhang2018generalized}.
The Generalized Cross Entropy (GCE) loss \cite{zhang2018generalized} applies a Box-Cox transformation to probabilities (power law function of probability with exponent $\rho \in (0, 1]$) which can behave like a generalized mixture of MAE and CE. Recently, \citet{wang2019symmetric} proposed the Symmetric Cross Entropy (SCE) which combines a Reverse Cross Entropy (RCE) together with the CE loss.  Both GCE and SCE are only partially robust to noisy labels. For example, GCE is only robust when it reduces to the MAE loss with $\rho=1$. For SCE, only its RCE term is robust. Empirically (rather than theoretically) justified approaches that directly modify the magnitude of the loss gradients are also an active line of research \cite{wang2019imae, wang2019derivative}. 

In this paper, we theoretically prove that, with simple normalization, any loss can be made robust to noisy labels. This new theoretical insight can serve as a basic principle for designing new robust loss functions. It also can reshape the design of new loss functions towards other properties rather than robustness.

\section{Any Loss can be Robust to Noisy Labels}\label{sec:any_loss_robust}
We next introduce some background knowledge about robust classification with noisy labels, then propose a simple but theoretically sound normalization method that can be applied to any loss function to make it robust to noisy labels.

\subsection{Preliminaries}\label{sec:preliminary}
Given a $K$-class dataset with noisy labels as $\mathcal{D} = \{(\xx, y)^{(i)}\}_{i=1}^n$, with $\xx \in \X \subset \mathbb{R}^d$ denoting a sample and $y \in \Y = \{1, \cdots, K\}$ its annotated label (possibly incorrect).
We denote the distribution over different labels for sample $\xx$ by $\qq(k|\xx)$, and $\sum_{k=1}^{K}\qq(k|\xx)=1$. In this paper, we focus on the common case where there is only one single label $y$ for $\xx$: i.e. $\qq(y|\xx)=1$ and $\qq(k \neq y|\xx)=0$. In this case, $\qq$ is simply the one-hot encoding of the label.

We denote the true label of $\xx$ as $y^{*}$.
While noisy labels may arise in different ways, one common assumption is that, given the true labels, the noise is conditionally independent to the inputs, i.e., $\qq(y=k|y^{*}=j, \xx) = \qq(y=k|y^{*}=j)$.
Under this assumption, label noise can be either \emph{symmetric} (or uniform), or \emph{asymmetric} (or class-conditional). We denote the overall noise rate by $\eta \in[0, 1]$ and the class-wise noise rate from class $j$ to class $k$ by $\eta_{jk}$. Then, for symmetric noise, $\eta_{jk} = \frac{\eta}{K - 1}$ for $j \neq k$ and $\eta_{jk} = 1 - \eta$ for $j = k$.  For asymmetric noise, $\eta_{jk}$ is conditioned on both the true class $j$ and mislabeled class $k$. 

Classification is to learn a function $f: \X \rightarrow \Y$ (as represented by a DNN) that maps the input space to the label space.
For a sample $\xx$, we denote the probability output of a DNN classifier $f(\xx)$ as: $\pp(k|\xx) = \frac{e^{\zz_{k}}}{\sum_{j=1}^K e^{\zz_{j}}}$, where $\zz_k$ denotes the logits output of the network with respect to class $k$.
Training classifier $f$ is to find a set of optimal parameters $\btheta$ that minimize the empirical risk defined by a loss function: $\btheta := \argmin_{\btheta} \; \sum_{i=1}^{n} \L(f(\xx_i), y_i)$, where $\L(f(\xx), y)$ is the loss of $f$ with respect to label $y$.
Next, we briefly introduce four loss functions that are either popularly used or recently proposed for robust classification with noisy labels. 

\noindent\textbf{Existing loss functions.} 
The commonly used Cross Entropy (CE) loss on sample $\xx$ is defined as: $CE = -\sum_{k=1}^{K} \qq(k|\xx) \log \pp(k|\xx)$, which has been proved not robust to noisy labels \cite{ghosh2017robust}.

Mean Absolute Error (MAE) is also a popular classification loss, and is defined as: $MAE = \sum_{k=1}^{K} |\pp(k|\xx) - \qq(k|\xx)|$. 
MAE is provably robust to label noise \cite{ghosh2017robust}.

The recently proposed Reverse Cross Entropy (RCE) loss  \cite{wang2019symmetric} is defined as: $RCE = -\sum_{k=1}^{K} \pp(k|\xx) \log \qq(k|\xx)$,
with $\qq(k \neq y|\xx)=0$ is truncated to a small value such that $\log(\qq(k \neq y|\xx)) = A$ (eg. $A=-4$). RCE has also been proved to be robust to label noise, and can be combined with CE to form the Symmetric Cross Entropy (SCE) for robust classification and boosted learning \cite{wang2019symmetric}.

Focal Loss (FL) \cite{lin2017focal}, originally proposed for dense object detection, is also an effective loss function for classification. FL is also a generalization of the CE loss, and is defined as: $FL = -\sum_{k=1}^{K} \qq(k|\xx) (1 - \pp(k|\xx))^{\gamma}\log \pp(k|\xx)$,
where $\gamma \geq 0$ is a tunable parameter. FL reduces to the CE loss when $\gamma=0$, and is not robust to noisy labels following \cite{ghosh2017robust}.

\subsection{Normalized Loss Functions}\label{normalized_loss}
Following \cite{ghosh2017robust,charoenphakdee2019symmetric}, we know that if a loss function $\L$ satisfies $\sum_{j}^{K} \L(f(\xx), j) = C, \forall \xx \in \X, \forall f$,
where $C$ is some constant, then $\L$ is noise tolerant under mild assumptions.
Based on this, we propose to normalize a loss function by:
\begin{equation}\label{eq:normalized_loss}
    \L_{\text{norm}} = \frac{\L(f(\xx), y)}{\sum_{j=1}^{K}\L(f(\xx), j)}.
\end{equation}
A normalized loss has the property: $\L_{\text{norm}} \in [0, 1]$.

Accordingly, we can normalize the above four loss functions defined in Section \ref{sec:preliminary} as follows.
The Normalized Cross Entropy (NCE) loss can be defined as:
\begin{equation}\label{eq:nce}
\begin{split}
    NCE &= \frac{-\sum_{k=1}^{K} \qq(k|\xx) \log \pp(k|\xx)}{-\sum_{j=1}^{K}\sum_{k=1}^{K} \qq(y=j|\xx) \log \pp(k|\xx)} \\
    &= \log_{\prod_{k}^{K} \pp(k|\xx)}\pp(y|\xx),
\end{split}
\end{equation}
where, the last equality holds following the change of base rule in logarithm (eg. $\log_a b = \frac{\log b}{\log a}$). 

The Normalized Mean Absolute Error (NMAE) is:
\begin{equation}\label{eq:nmae}
\begin{split}
    NMAE &= \frac{\sum_{k=1}^{K} |\pp(k|\xx) - \qq(k|\xx)|}{\sum_{j=1}^{K}\sum_{k=1}^{K} |\pp(k|\xx) - \qq(y=j|\xx)|} \\
    &= \frac{1}{K-1}(1 - \pp(y|\xx)) = \frac{1}{2(K-1)} \cdot MAE.
\end{split}
\end{equation}
The last two equalities hold due to $\sum_{k=1}^{K} |\pp(k|\xx) - \qq(k|\xx)| = 2(1 - \pp(y|\xx))$.
As can be observed, NMAE is simply a scaled version of MAE by a factor of $\frac{1}{2(K-1)}$.

The Normalized Reverse Cross Entropy (NRCE) loss is:
\begin{equation}\label{eq:nrce}
\begin{split}
    NRCE &= \frac{-\sum_{k=1}^{K} \pp(k|\xx) \log \qq(k|\xx)}{-\sum_{j=1}^{K}\sum_{k=1}^{K} \pp(k|\xx) \log \qq(y=j|\xx)} \\
    &= \frac{1}{K-1}(1 - \pp(y|\xx)) = \frac{1}{A(K-1)} \cdot RCE.
\end{split}
\end{equation}
The last two equalies hold as $\sum_{k=1}^{K} \pp(k|\xx) \log \qq(k|\xx) = A(1-\pp(y|\xx))$.
Similar to NMAE, NRCE is a scaled version of RCE by a factor of $\frac{1}{A(K-1)}$.

The Normalized Focal Loss (NFL) can be defined as:
\begin{equation}\label{eq:nfl}
\footnotesize
\begin{split}
    NFL &= \frac{-\sum_{k=1}^{K} \qq(k|\xx) (1 - \pp(k|\xx))^{\gamma} \log \pp(k|\xx)}{-\sum_{j=1}^{K}\sum_{k=1}^{K} \qq(y=j|\xx) (1 - \pp(k|\xx))^{\gamma} \log \pp(k|\xx)} \\
    &= \log_{\prod_{k}^{K} (1 - \pp(k|\xx))^{\gamma}\pp(k|\xx)}(1 - \pp(y|\xx))^{\gamma}\pp(y|\xx).
\end{split}
\end{equation}

Under this normalization scheme, the normalized forms of robust loss functions such as MAE and RCE are simply a scaled version of their original forms. This keeps their robustness property. For the rest of this paper, we will use the original forms for MAE and RCE if not otherwise explicitly stated. On the contrary, normalization on nonrobust loss functions such as CE and FL derives new loss functions.
Note that the above four normalized losses are just a proof-of-concept, other loss functions can also be normalized following Eq. \eqref{eq:normalized_loss}. 

\subsection{Theoretical Justification}
Following previous works \cite{ghosh2017robust,wang2019symmetric}, we can show that normalized loss functions are noise tolerant to both symmetric and asymmetric label noise.

\begin{lemma}\label{lemma_1}
In a multi-class classification problem, any normalized loss function $\L_{\text{norm}}$ is noise tolerant under symmetric (or uniform) label noise, if noise rate $\eta < \frac{K-1}{K}$.
\end{lemma}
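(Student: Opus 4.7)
The plan is to invoke the classical sufficient condition of Ghosh et al.: if a loss $\L$ satisfies $\sum_{j=1}^{K} \L(f(\xx), j) = C$ for some constant $C$ independent of $f$ and $\xx$, then $\L$ is noise tolerant to symmetric label noise provided $\eta < (K-1)/K$. The normalization in Eq.\ \eqref{eq:normalized_loss} is tailored exactly to enforce this: summing the right-hand side over $y = 1,\dots,K$ gives $\sum_{j=1}^{K} \L_{\text{norm}}(f(\xx), j) = 1$ for every classifier $f$ and every input $\xx$, so the hypothesis holds with $C = 1$.

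First I would write the risk under the noisy distribution as
\begin{equation*}
R^{\eta}(f) \;=\; \E_{\xx, y^{*}}\, \E_{y \mid y^{*}, \xx}\bigl[\L_{\text{norm}}(f(\xx), y)\bigr].
\end{equation*}
Under symmetric noise, $y = y^{*}$ with probability $1-\eta$ and $y = k$ with probability $\eta/(K-1)$ for each $k \neq y^{*}$. Conditioning on $y^{*}$ and splitting the inner expectation then gives
\begin{equation*}
R^{\eta}(f) \;=\; (1-\eta)\, R(f) \;+\; \frac{\eta}{K-1}\, \E_{\xx, y^{*}}\!\Bigl[\textstyle\sum_{k \neq y^{*}} \L_{\text{norm}}(f(\xx), k)\Bigr],
\end{equation*}
where $R(f) = \E_{\xx, y^{*}}[\L_{\text{norm}}(f(\xx), y^{*})]$ denotes the clean risk.

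Next I would apply the constant-sum property to replace $\sum_{k \neq y^{*}} \L_{\text{norm}}(f(\xx), k)$ by $1 - \L_{\text{norm}}(f(\xx), y^{*})$ and collect terms. A short calculation yields
\begin{equation*}
R^{\eta}(f) \;=\; \frac{(1-\eta)K - 1}{K-1}\, R(f) \;+\; \frac{\eta}{K-1}.
\end{equation*}
The additive term does not depend on $f$, and the multiplicative coefficient on $R(f)$ is strictly positive precisely when $\eta < (K-1)/K$. Therefore the noisy and clean risks are minimized by exactly the same set of classifiers, which is the definition of noise tolerance adopted in the referenced works.

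There is no real obstacle in this proof; it is essentially a direct corollary of the characterization in Ghosh et al.\ once the constant-sum identity $\sum_{j=1}^{K} \L_{\text{norm}}(f(\xx), j) = 1$ is observed. The only points requiring care are (i) ensuring that the symmetric-noise conditional independence assumption from Section~\ref{sec:preliminary} is used to justify pulling the constant noise rates out of the expectation, and (ii) bookkeeping the sign of the coefficient $((1-\eta)K - 1)/(K-1)$ so that the threshold $\eta < (K-1)/K$ emerges naturally.
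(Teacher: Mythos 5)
Your proposal is correct and follows essentially the same route as the paper's own proof: both derive the affine identity $R^{\eta}(f) = \bigl(1-\tfrac{\eta K}{K-1}\bigr)R(f) + \tfrac{\eta}{K-1}$ from the constant-sum property $\sum_{j=1}^{K}\L_{\text{norm}}(f(\xx),j)=1$ and conclude noise tolerance from the positivity of the coefficient when $\eta < \tfrac{K-1}{K}$. No gaps; your remark that minimizer sets coincide is a (correct) slightly stronger phrasing of the paper's conclusion that $f^*$ also minimizes $R^{\eta}$.
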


\begin{lemma}\label{lemma_2}
In a multi-class classification problem, given $R(f^{*})=0$ and $0 \leq \L_{\text{norm}}(f(\xx), k) \leq \frac{1}{K-1}, \forall k$, any normalized loss function $\L_{\text{norm}}$ is noise tolerant under asymmetric (or class-conditional) label noise, if noise rate $\eta_{jk} < 1- \eta_y$.
\end{lemma}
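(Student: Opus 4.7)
The plan is to adapt the asymmetric-noise argument of Ghosh et al.\ to the normalized setting, leaning on the identity $\sum_{j=1}^{K}\L_{\text{norm}}(f(\xx),j)=1$ that normalization forces. First I would expand the noisy risk pointwise in the true label $y^{*}$ as
\begin{equation*}
R^{\eta}(f) = \E_{\xx, y^{*}}\!\left[(1-\eta_{y^{*}})\,\L_{\text{norm}}(f(\xx),y^{*}) + \sum_{k\neq y^{*}}\eta_{y^{*}k}\,\L_{\text{norm}}(f(\xx),k)\right],
\end{equation*}
where $\eta_{y^{*}}=\sum_{k\neq y^{*}}\eta_{y^{*}k}$, and then substitute $\L_{\text{norm}}(f(\xx),y^{*}) = 1 - \sum_{k\neq y^{*}}\L_{\text{norm}}(f(\xx),k)$ so that the integrand regroups into $(1-\eta_{y^{*}}) - \sum_{k\neq y^{*}}\bigl((1-\eta_{y^{*}})-\eta_{y^{*}k}\bigr)\L_{\text{norm}}(f(\xx),k)$.

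Second, I would evaluate $R^{\eta}(f^{*})$ explicitly. From $R(f^{*})=0$ together with non-negativity of $\L_{\text{norm}}$, I get $\L_{\text{norm}}(f^{*}(\xx),y^{*})=0$ almost surely. The normalization identity then forces $\sum_{k\neq y^{*}}\L_{\text{norm}}(f^{*}(\xx),k)=1$, and combining this with the hypothesis $\L_{\text{norm}}(f^{*}(\xx),k)\le\frac{1}{K-1}$ pins each of the $K-1$ wrong-class losses to be exactly $\frac{1}{K-1}$. Substituting back into the integrand evaluates $R^{\eta}(f^{*}) = \E\!\left[\eta_{y^{*}}/(K-1)\right]$.

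Finally, I would form the gap $R^{\eta}(f)-R^{\eta}(f^{*})$. The hypothesis $\eta_{y^{*}k} < 1-\eta_{y^{*}}$ makes every coefficient $(1-\eta_{y^{*}})-\eta_{y^{*}k}$ strictly positive, so the pointwise bound $\L_{\text{norm}}(f(\xx),k)\le\frac{1}{K-1}$ may be applied termwise to upper bound the subtracted sum by $(1-\eta_{y^{*}}) - \eta_{y^{*}}/(K-1)$. A short algebraic cancellation then yields $R^{\eta}(f)-R^{\eta}(f^{*})\ge 0$ pointwise, and taking expectations shows $f^{*}$ is also a minimizer of the noisy risk, which is the required noise tolerance.

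The main obstacle will be the bookkeeping: the sign of each coefficient $(1-\eta_{y^{*}})-\eta_{y^{*}k}$ is exactly what licenses applying the $\frac{1}{K-1}$ bound in the correct direction, and the cancellation relies on these signs being positive for \emph{every} $k\neq y^{*}$ simultaneously. I also want to emphasize that the inequality $\L_{\text{norm}}\le\frac{1}{K-1}$ is a standing hypothesis rather than a consequence of normalization, which is what forces the asymmetric case to be strictly more restrictive than the symmetric Lemma~\ref{lemma_1}.
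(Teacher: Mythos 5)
Your proof is correct, and it rests on the same two pillars as the paper's: the regrouped expression $R^{\eta}(f)=\E_{\xx,y^{*}}\bigl[(1-\eta_{y^{*}})-\sum_{k\neq y^{*}}\bigl((1-\eta_{y^{*}})-\eta_{y^{*}k}\bigr)\L_{\text{norm}}(f(\xx),k)\bigr]$ obtained from the normalization identity $\sum_{j}\L_{\text{norm}}(f(\xx),j)=1$, and the observation that $R(f^{*})=0$ together with the bound $\L_{\text{norm}}\le\frac{1}{K-1}$ pins $\L_{\text{norm}}(f^{*}(\xx),k)=\frac{1}{K-1}$ for every $k\neq y^{*}$. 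Where you diverge is the endgame: the paper introduces the noisy minimizer $f^{*}_{\eta}$, writes the variational inequality $R^{\eta}(f^{*}_{\eta})-R^{\eta}(f^{*})\le 0$, and argues (using that $\frac{1}{K-1}$ is the maximal allowed value) that the weighted difference of losses must vanish, concluding $f^{*}_{\eta}=f^{*}$; you instead compute $R^{\eta}(f^{*})=\E\!\left[\eta_{y^{*}}/(K-1)\right]$ and show, by applying the bound termwise against the strictly positive coefficients $(1-\eta_{y^{*}})-\eta_{y^{*}k}$, that this value is a pointwise lower bound on the integrand of $R^{\eta}(f)$ for every $f$, hence $f^{*}$ globally minimizes the noisy risk. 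Your route is slightly cleaner: it avoids the paper's glossed passage from an inequality in expectation to a pointwise conclusion (which is legitimate only because the relevant difference is sign-definite there), and it delivers exactly the noise-tolerance statement required. The paper's variational version nominally gives a bit more, namely that any noisy-risk minimizer has the same loss values as $f^{*}$, but your equality analysis (equality forces $\L_{\text{norm}}(f(\xx),k)=\frac{1}{K-1}$ for all $k\neq y^{*}$ almost everywhere) would recover that in one extra line. Your closing caveat is also on point, and in fact both proofs need it: the bound $\L_{\text{norm}}\le\frac{1}{K-1}$ must hold for arbitrary $f$ (the paper applies it to $f^{*}_{\eta}$, you to general $f$), so it is a standing hypothesis on the loss rather than a consequence of normalization.
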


Detailed proofs for Lemma \ref{lemma_1} and Lemma \ref{lemma_2} can be found in Appendix \ref{appendix_proof}. 
We denote the risk of classifier $f$ under clean labels as $R(f)=\E_{\xx, y^{*}}\L_{\text{norm}}$, and the risk under label noise rate $\eta$ as $R^{\eta}(f)=\E_{\xx, y} \L_{\text{norm}}$. Let $f^*$ and $f_{\eta}^*$ be the global minimizers of $R(f)$ and $R^{\eta}(f)$, respectively. We need to prove $f^*$ is also a global minimizer of noisy risk $R^{\eta}(f)$ for $\L$ to be robust.  
The noise rate conditions in Lemma \ref{lemma_1} ($\eta < \frac{K-1}{K}$) and Lemma \ref{lemma_2} ($\eta_{jk} < 1- \eta_y$) generally requires that the correct labels are still the majority of the class. 
In Lemma \ref{lemma_2}, the restrictive condition $R(f^{*})=0$ may not be satisfied in practice (eg. the classes may not completely separable), however, good empirical robustness can still be achieved.
While the condition $0 \leq \L_{\text{norm}}(f(\xx), k) \leq \frac{1}{K-1}, \forall k$ can be easily satisfied by a typical loss function. 
We refer the reader to \cite{charoenphakdee2019symmetric} for more discussions of other theoretical properties such as classification calibration.

So far, we have presented a somewhat surprising but theoretically justified result that any loss function can be made robust to noisy labels. This advances current theoretical progresses in this field. While this finding is exciting, in the following, we will empirically show that robustness alone is not sufficient for obtaining good performance.

\section{Robustness Alone is not Sufficient}\label{sec:pos_neg}
In this section, we empirically show that the above four robust loss functions (eg. NCE, NFL, MAE and RCE) all suffer from an underfitting problem, and thus are not sufficient by themselves to train accurate DNNs. We then propose a new framework to build loss functions that are both theoretically robust and learning sufficient.

\noindent\textbf{Robust losses can suffer from underfitting.}
To motivate this problem, we use an example on CIFAR-100 dataset with 0.6 symmetric noise. We train a ResNet-34 \cite{he2016deep} using both normalized and unnormalized loss functions (detailed setting can be found in Section \ref{sec:benckmark_robust}). As can be observed in Figure \ref{fig:1}, CE and FL losses become robust after normalization, however, this robustness does not lead to more accurate models. In fact, robust losses NCE and NFL demonstrate even worse performance than nonrobust CE and FL. Moreover, even without normalization, the originally robust loss functions MAE and RCE also suffer from underfitting: they even fail to converge in this scenario.
We find that this underfitting issue occur across different training settings in terms of learning rate, learning rate scheduler, weight decay and the number of training epochs.
We identify this problem as an \emph{underfitting problem of existing robust loss functions}, at least for the four tested loss functions (eg. NCE, NFL, MAE and RCE).
Next, we will propose a new loss framework to address this problem.

\begin{figure}[!ht]
	\centering
	\begin{subfigure}{0.48\linewidth}
		\includegraphics[width=\textwidth]{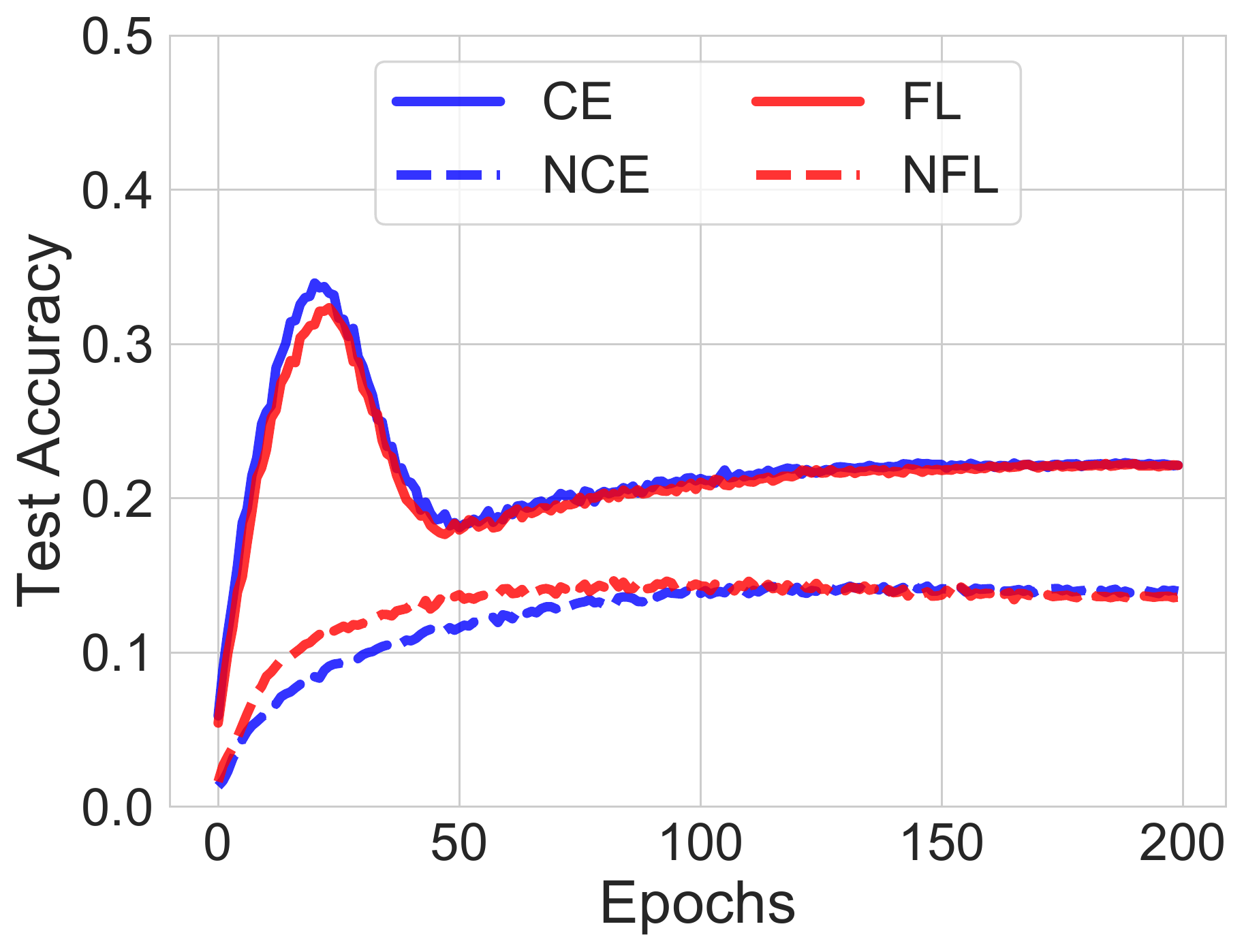}
		\label{ce_nce_100}
	\end{subfigure}
	\begin{subfigure}{0.48\linewidth} 
		\includegraphics[width=\textwidth]{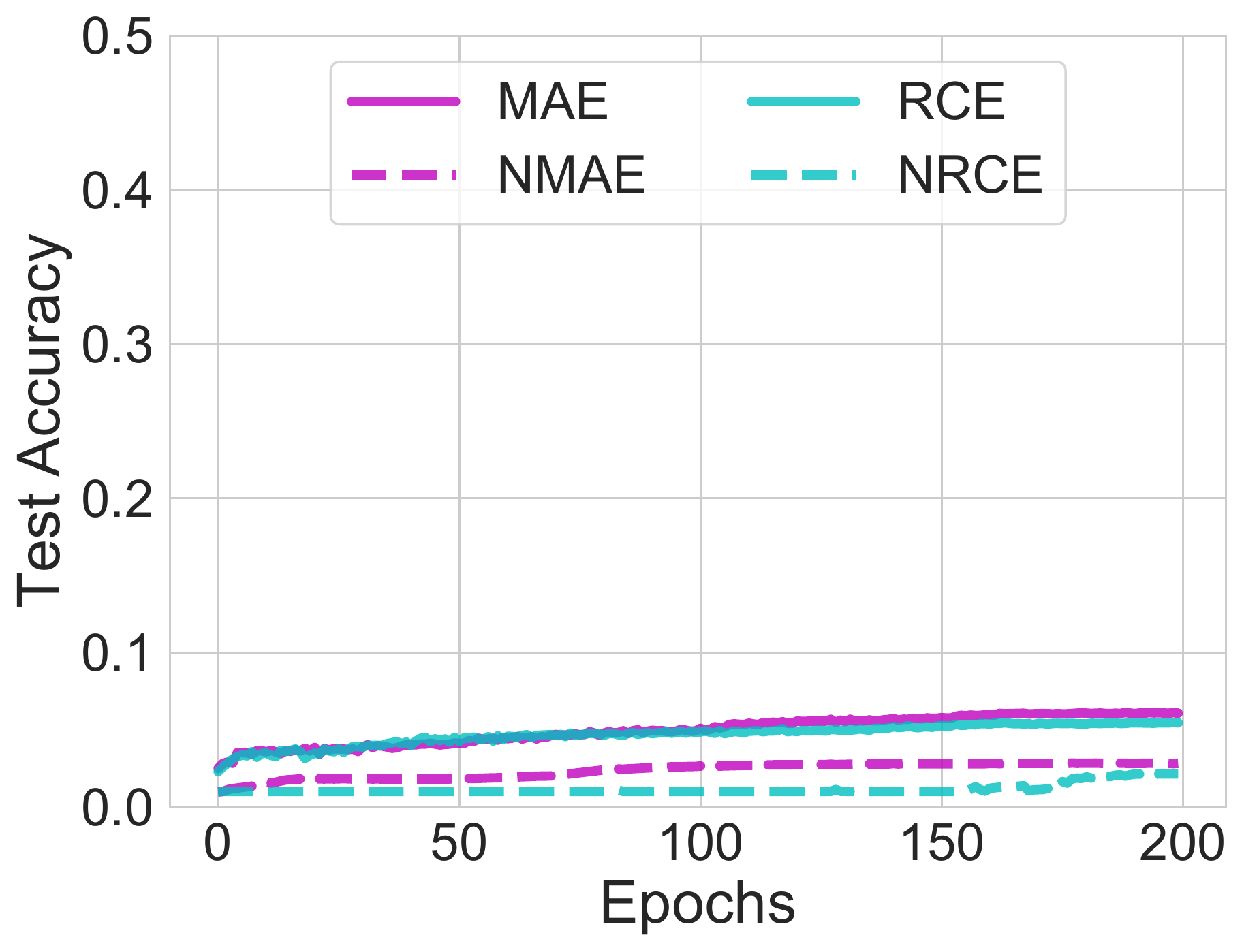}
		\label{fl_nfl_100}
	\end{subfigure}
	\vspace{-0.15 in}
	\caption{Test accuracies of unnormalized versus normalized loss functions on CIFAR-100 under 0.6 symmetric noise.}
	\vspace{-0.15 in}
	\label{fig:1}
\end{figure}

\subsection{Proposed Active Passive Loss (APL)}\label{sec:pos_neg}
In \cite{kim2019nlnl}, the use of complementary labels (``input does not belong to this complementary class") together with the original labels was shown to help learning and robustness.
In \cite{wang2019symmetric}, a Reverse Cross Entropy term was found can provide a robust boost to the CE loss. To generalize these works taking a loss function perspective, we characterize existing robust functions into two types: ``Active" and ``Passive", based on their optimization (maximization/minimization) behaviors.

At a high level, a loss is defined ``Active" if it only optimizes at $\qq(k = y|x)=1$, otherwise, a loss is defined as ``Passive". We denote the basic function of loss $\L(f(x), y)$ by $\ell(f(x), k)$, that is $\L(f(x), y) = \sum_{k=1}^{K} \ell(f(x), k)$. Then, we can define the active and passive loss functions as:

\begin{definition}{(Active loss function)}
$\L_{\text{Active}}$ is an active loss function if $\forall (\xx, y) \in \D \; \forall k \neq y \;\ell(f(\xx), k) = 0$.
\end{definition}

\begin{definition}{(Passive loss function)}
$\L_{\text{Passive}}$ is a passive loss function if
$\forall (\xx, y) \in \D \; \exists k \neq y \; \ell(f(\xx), k) \neq 0 $.
\end{definition}

According to the above two definitions, active losses only explicitly maximize the network's output probability at the class position specified by the label $y$. For example in CE loss, only the probability at $\qq(k=y|x)=1$ is explicitly maximized (the loss is zero at $\qq(k \neq y|x)=0$). Different from active losses, passive losses also explicitly minimize the probability at at least one other class positions. For example in MAE, the probabilities at position $k \neq y$ are also explicitly minimized along with 
the maximization of the probability at $k=y$. Note that this characterization applies to both robust and nonrobust loss functions. Table \ref{tab:summary} summarizes examples of active and passive losses.

\begin{table}[!ht]
\vspace{-0.1 in}
\caption{Examples of active and passive loss functions.}
\label{tab:summary}
\centering
\begin{adjustbox}{width=1\linewidth}
\small
\begin{tabular}{l|c|c}
\hline
Loss Type & Active & Passive \\ \hline
 Examples & CE, NCE, FL, NFL & MAE, NMAE, RCE, NRCE \\
 \hline
\end{tabular}
\end{adjustbox}
\vspace{-0.1 in}
\end{table}

\noindent\textbf{Definition of APL.} Inspired by the benefit of symmetric \cite{wang2019symmetric} or complementary learning \cite{kim2019nlnl}, we propose to combine a robust active loss and a robust passive loss into an ``Active Passive Loss" (APL) framework for both robust and sufficient learning. Formally,
\begin{equation}\label{eq:apl}
    \L_{\text{APL}} = \alpha \cdot \L_{\text{Active}} + \beta \cdot \L_{\text{Passive}},
\end{equation}
where, $\alpha, \beta > 0$ are parameters to balance the two terms.
An important requirement for the two loss terms is robustness, which means a nonrobust loss should be normalized following Eq. \eqref{eq:normalized_loss} for it to be used within our APL scheme. This guarantees the robustness property of APL loss functions (proof can be found in Appendix \ref{appendix_proof}):

\begin{lemma}\label{lemma_3}
$\forall \alpha, \forall \beta$, if $\L_{\text{Active}}$ and $\L_{\text{Passive}}$ are noise tolerant, then $\L_{\text{APL}} = \alpha \cdot \L_{\text{Active}} + \beta \cdot \L_{\text{Passive}}$ is noise tolerant.
\end{lemma}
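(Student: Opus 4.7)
\textbf{Proof proposal for Lemma \ref{lemma_3}.} The plan is to exploit linearity of the risk functional in the loss and mirror, line for line, the structure of the proofs of Lemma \ref{lemma_1} and Lemma \ref{lemma_2} applied to each of the two summands. Concretely, write $R(f) = \E_{\xx, y^{*}} \L(f(\xx), y^{*})$ and $R^{\eta}(f) = \E_{\xx, y} \L(f(\xx), y)$ for a generic loss $\L$, and note that for $\L_{\text{APL}} = \alpha \L_{\text{Active}} + \beta \L_{\text{Passive}}$ the linearity of expectation gives $R_{\text{APL}}(f) = \alpha R_{\text{Active}}(f) + \beta R_{\text{Passive}}(f)$ and likewise $R^{\eta}_{\text{APL}}(f) = \alpha R^{\eta}_{\text{Active}}(f) + \beta R^{\eta}_{\text{Passive}}(f)$. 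The goal is then to show that any global minimizer $f^{*}$ of $R_{\text{APL}}$ is also a global minimizer of $R^{\eta}_{\text{APL}}$.

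For the symmetric case, I would recall the standard identity used in the proof of Lemma \ref{lemma_1}: for any loss $\L$ with $\sum_{j=1}^{K} \L(f(\xx), j) = C$ (which is automatic for normalized losses up to a scaling, as guaranteed by the hypothesis of robustness), one has
\begin{equation*}
R^{\eta}(f) \;=\; \bigl(1 - \tfrac{\eta K}{K-1}\bigr)\, R(f) \;+\; \tfrac{\eta\, \E_{\xx} C}{K-1}.
\end{equation*}
Applying this to $\L_{\text{Active}}$ and $\L_{\text{Passive}}$ separately with their respective constants $C_{\text{A}}, C_{\text{P}}$, and then taking the $\alpha$, $\beta$ combination, I would obtain
\begin{equation*}
R^{\eta}_{\text{APL}}(f) \;=\; \bigl(1 - \tfrac{\eta K}{K-1}\bigr)\, R_{\text{APL}}(f) \;+\; \tfrac{\eta}{K-1}\,\E_{\xx}(\alpha C_{\text{A}} + \beta C_{\text{P}}).
\end{equation*}
Since the additive term is a constant independent of $f$ and the coefficient $(1 - \eta K/(K-1))$ is strictly positive under $\eta < (K-1)/K$, the set of minimizers of $R^{\eta}_{\text{APL}}$ coincides with that of $R_{\text{APL}}$. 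This immediately yields noise tolerance of $\L_{\text{APL}}$ under symmetric noise.

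For the asymmetric case, I would mimic the pattern of the Lemma \ref{lemma_2} proof in Appendix \ref{appendix_proof}, which ultimately establishes a sign inequality $R^{\eta}_{\text{Active}}(f^{*}) - R^{\eta}_{\text{Active}}(f) \le 0$ for any $f$ (and analogously for the passive loss), provided $R(f^{*}) = 0$ and the per-class boundedness $0 \le \L(f(\xx), k) \le 1/(K-1)$. Taking the convex combination of these two inequalities with positive weights $\alpha, \beta$ preserves the direction of the inequality and yields $R^{\eta}_{\text{APL}}(f^{*}) - R^{\eta}_{\text{APL}}(f) \le 0$, which is precisely noise tolerance for $\L_{\text{APL}}$. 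Note that $R(f^{*}) = 0$ for the clean APL risk forces $R_{\text{Active}}(f^{*}) = R_{\text{Passive}}(f^{*}) = 0$ because both summands are nonnegative, so the prerequisite of Lemma \ref{lemma_2} transfers to each component.

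The only real subtlety, and the step I would spend the most care on, is the asymmetric case: one must be careful that the per-class boundedness hypothesis needed to invoke Lemma \ref{lemma_2} componentwise is implied by (or otherwise compatible with) the corresponding hypothesis on $\L_{\text{APL}}$, modulo a harmless rescaling by $\alpha + \beta$. Aside from this bookkeeping, the argument is a direct linearity-of-expectation reduction, and no new ideas beyond those in Lemmas \ref{lemma_1} and \ref{lemma_2} should be required.
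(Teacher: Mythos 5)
Your proposal is correct and follows essentially the paper's strategy: exploit that the $\alpha,\beta$-combination inherits the constant-sum property ($\alpha C_{\text{Active}}+\beta C_{\text{Passive}}$) and then rerun the machinery of Lemmas \ref{lemma_1} and \ref{lemma_2}. For symmetric noise your derivation is the same affine identity the paper uses, with the same conclusion that the clean and noisy minimizers coincide. For asymmetric noise the paper re-derives the risk decomposition and the sign inequality directly for $\L_{\text{APL}}$ (treating it as a single constant-sum loss and repeating the Lemma \ref{lemma_2} argument), whereas you instead invoke the componentwise conclusion at the common minimizer and add the two inequalities with positive weights; this is a legitimate minor variation, and your supporting observation --- that $R_{\text{APL}}(f^{*})=0$ together with nonnegativity forces $R_{\text{Active}}(f^{*})=R_{\text{Passive}}(f^{*})=0$, so $f^{*}$ is a zero-risk clean minimizer of each component --- is exactly the bookkeeping needed for that route, with the per-class boundedness of each component being part of what ``$\L_{\text{Active}}$ and $\L_{\text{Passive}}$ are noise tolerant'' presupposes (the paper is no more explicit on this point than you are). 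Either way, no idea is missing; the two arguments are interchangeable, with the paper's combined-loss version keeping the statement self-contained and yours making the linearity reduction more transparent.
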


In APL, the two loss terms optimize the same objective from two complementary directions (eg. maximizing $\pp(k=y|\xx)$ and minimizing $\pp(k \neq y|\xx)$). 
For the four loss functions considered in this paper, there are four possible combinations satisfying our APL principle: 1) $\alpha NCE+ \beta MAE$, 2) $\alpha NCE+\beta RCE$, 3) $\alpha NFL+ \beta MAE$ and 4) $\alpha NFL+ \beta RCE$. For simplicity we omit the parameters $\alpha, \beta$ in the rest of this paper. 
According to our active/passive definitions, APL losses can be considered as passive losses. However, APL losses are different from passive losses that have only one term, since they contain at least two terms and one of them is an active loss term.
Whilst different choices of the two loss terms may lead to different performance, we will show in Section \ref{sec:experiments} that APL losses generally achieve better or at least comparable performance to state-of-the-art noisy label learning methods.

\subsection{More Insights into APL Loss Functions}
Here, we provide some insights into the underfitting issue of robust loss functions, and why the proposed APL losses can address underfitting.

\noindent\textbf{Why robust loss functions underfit?}
Taking the NCE loss defined in Eq. \eqref{eq:nce} as an example, the underfitting is caused by the extra terms introduced into the denominator by the normalization. In Eq. \eqref{eq:nce}, NCE is in the form of $\frac{P}{P+Q}$, where $P=-\log(p_y)$ and $Q=-\sum_{k \neq y}\log(p_k)$. During training, the $Q$ term may increase even when $P$ is fixed (eg. $p_y$ is fixed), and it reaches the highest value when all $p_{k \neq y}$ equals to $(1-p_y)/(K-1)$ (eg. the highest entropy). This implies that the network may learn nothing for the prediction (as $p_y$ is fixed) even when the loss decreases (as $Q$ increases). This tends to hinder the convergence and cause the underfitting problem. Other robust loss functions such as MAE and RCE all suffer from a similar issue.

\noindent\textbf{Why APL can address underfitting?}
APL combines an active loss with a passive loss. By definition, the passive loss explicitly minimizes (at least one component of) the Q term discussed above so that it won’t increase when $p_y$ is fixed. This directly addresses the underfitting issue of a robust active loss. Therefore, APL losses can leverage both the robustness and the convergence advantages.
Note that, by definition, passive loss has a broader scope than active loss. A single passive loss like MAE can be decomposed into an active term and a passive term, with the two terms already form an APL loss. With proper balancing between the two terms, the reformulated MAE can also be a powerful new loss. For example, a recent work has shown that a reweighted MAE can outperform CE \cite{wang2019imae}.

\subsection{Connection to Related Work}
Our APL framework is a generalization of several state-of-the-art methods. Following APL, better performance can be achieved with existing loss functions, rather than complex modifications on the training procedure.
Although NLNL \cite{kim2019nlnl} can improve robustness with complementary labels, it has slow convergence (10$\times$ slower than standard training), and requires a complex 3-stage training procedure: 1) training with complementary labels, 2) training with high confidence (above a threshold) complementary labels, and 3) training with high confidence original labels. From our APL perspective, NLNL switches back and forth between active learning (with original labels) and passive learning (with complementary labels). Such a learning scheme can instead be achieved alternatively using our APL. 
Indeed, when defined on complementary labels, the CE loss becomes -1/(C-1)log(1-RCE) with A=-1 in RCE, and our APL loss NCE+RCE can be seen as a simpler alternative for NLNL.
Compared to the SCE \cite{wang2019symmetric} loss (eg. CE+RCE), our APL loss NCE+RCE can be seen as its normalized version, which has theoretically guaranteed robustness. This modification to SCE can improve its performance considerably (see Section \ref{sec:benckmark_robust}).
Compared to the GCE loss \cite{zhang2018generalized} which can be regraded as a mixture of CE and MAE, our APL loss NCE+MAE is an alternative solution that directly adds the two terms together with normalization. NCE+MAE is theoretically robust while GCE is not. Moreover, the GCE loss itself can be normalized and improved following our APL framework (see Section \ref{sec:ngce}).

\section{Experiments}\label{sec:experiments}
In this section, we empirically investigate our proposed APL loss functions on benchmark datasets MNIST \cite{lecun1998gradient}, CIFAR-10/-100
\cite{krizhevsky2009learning}, and a real-world noisy dataset WebVision \cite{li2017webvision}.

\subsection{Empirical Understandings}\label{sec:understanding}

\noindent\textbf{Normalized losses are robust.}
We first run a set of experiments on CIFAR-10 and CIFAR-100 to verify whether non-robust losses CE and FL become robust after normalization (NCE and NFL). We set the label noise to be symmetric, and the noise rate to 0.6 for both CIFAR-10 and CIFAR-100. We use an 8-layer convolutional neural network (CNN) for CIFAR-10 and a ResNet-34 \cite{he2016deep} for CIFAR-100. On each dataset, we train the same network using different loss functions, eg. normalized versus unnormalized. For FL/NFL loss we set $\gamma=0.5$, while for RCE/NRCE loss, we set $A=-4$. 
Detailed settings are in Section \ref{sec:benckmark_robust}.

As shown in Figures \ref{fig:1} \& \ref{fig:2}, both CE and FL losses exhibit significant overfitting after epoch 25. However, as we have theoretically proved, their normalized forms (eg. NCE and NFL) are robust: no overfitting was observed during the entire training process. Moreover, for the already robust loss functions MAE and RCE, normalization does not break their robustness property. We observe the same results across different datasets (eg. MNIST, CIFAR-10 and CIFAR-100) under different noise rates ($\eta \in [0.2, 0.8]$). In general, the higher the noise rate, the more overfitting of nonrobust loss functions, and their normalized forms are always robust. This empirically verifies our theoretical finding that any loss can be made robust following normalization in Eq. \eqref{eq:normalized_loss}.

\noindent\textbf{Can scaling help sufficient learning?}
As one may have noticed in Figures \ref{fig:1} \& \ref{fig:2}, NMAE and NRCE exhibit more severe underfitting than MAE and RCE, even though they are just scaled versions of MAE and RCE. This raises the question: can the underfitting problem be addressed by scaling the normalized losses up by a factor?
In Figure \ref{fig:3}, we show different scales applied to NCE, NFL, MAE and RCE for training on CIFAR-100 with 0.6 symmetric noise. We find that scaled NCE and NFL only slightly improve learning after epoch 150, when the learning rate is decayed to be smaller. This is because scaling the loss is equivalent to scaling the gradients, a similar effect to increasing the learning rate. Moreover, scaled MAE and RCE still fail to converge in this scenario.
This highlights that scaling may not be an effective solution for sufficient learning, especially for challenging datasets like CIFAR-100. On the simple dataset CIFAR-10, proper scaling does help learning. But this can alternatively can be achieved by adjusting the learning rate.

\begin{figure}[!t]
	\centering
	\begin{subfigure}{0.48\linewidth}
		\includegraphics[width=\textwidth]{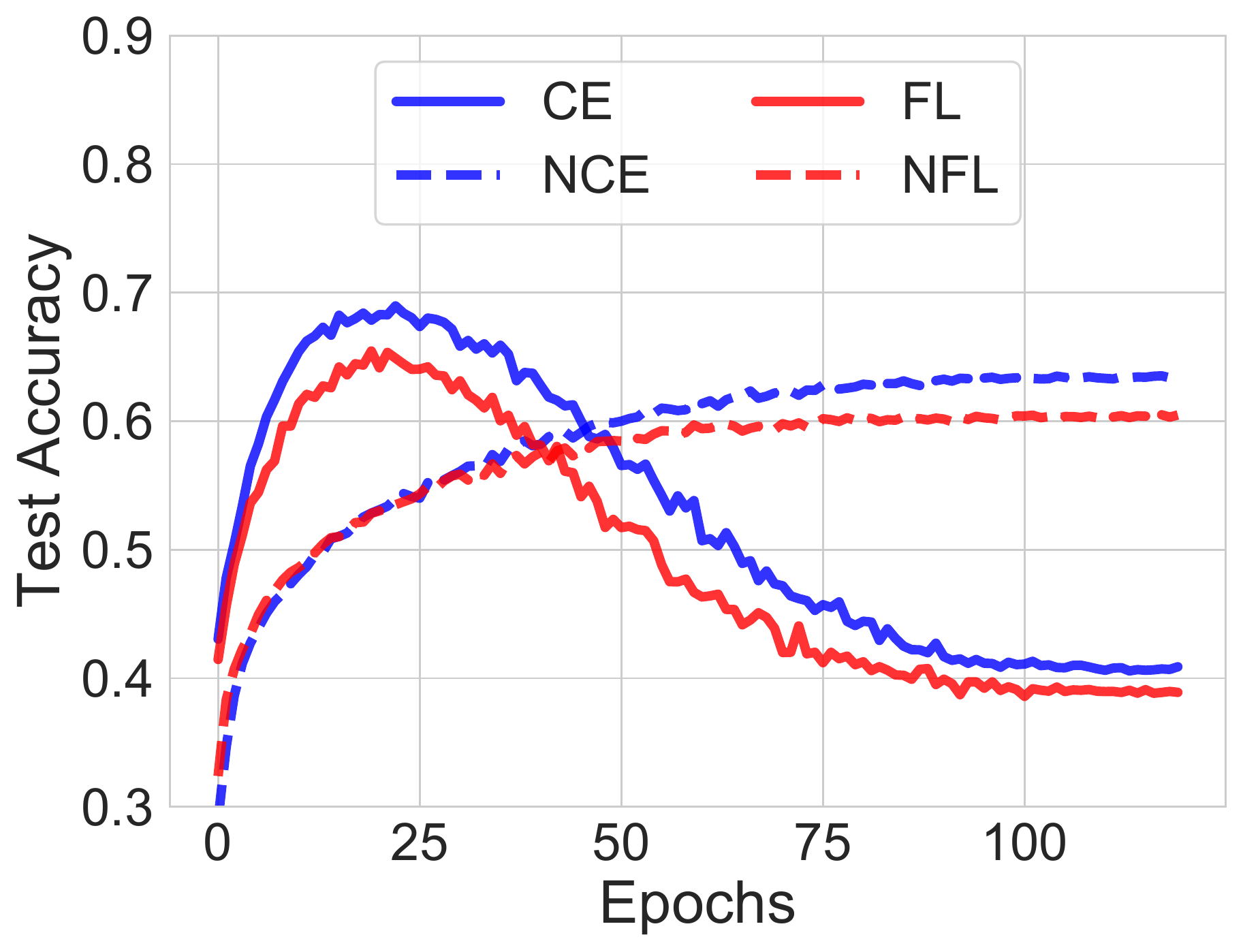}
		\label{ce_nce}
	\end{subfigure}
	\begin{subfigure}{0.48\linewidth} 
		\includegraphics[width=\textwidth]{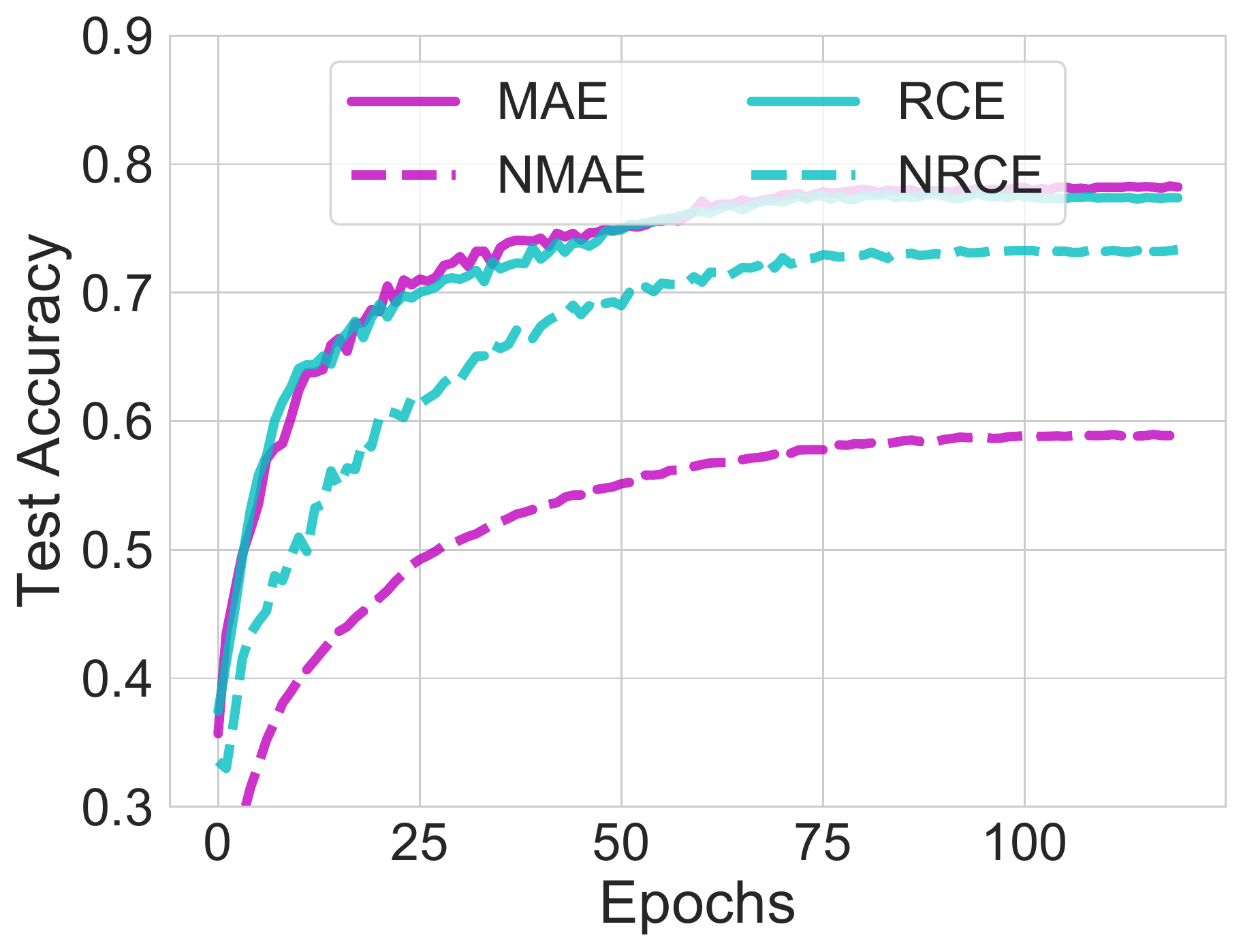}
		\label{fl_nfl}
	\end{subfigure}
	\vspace{-0.25 in}
	\caption{Test accuracies of unnormalized versus normalized loss functions on CIFAR-10 under 0.6 symmetric noise. }
	\label{fig:2}
\end{figure}

\begin{figure}[!t]
	\centering
	\begin{subfigure}{0.48\linewidth}
		\includegraphics[width=\textwidth]{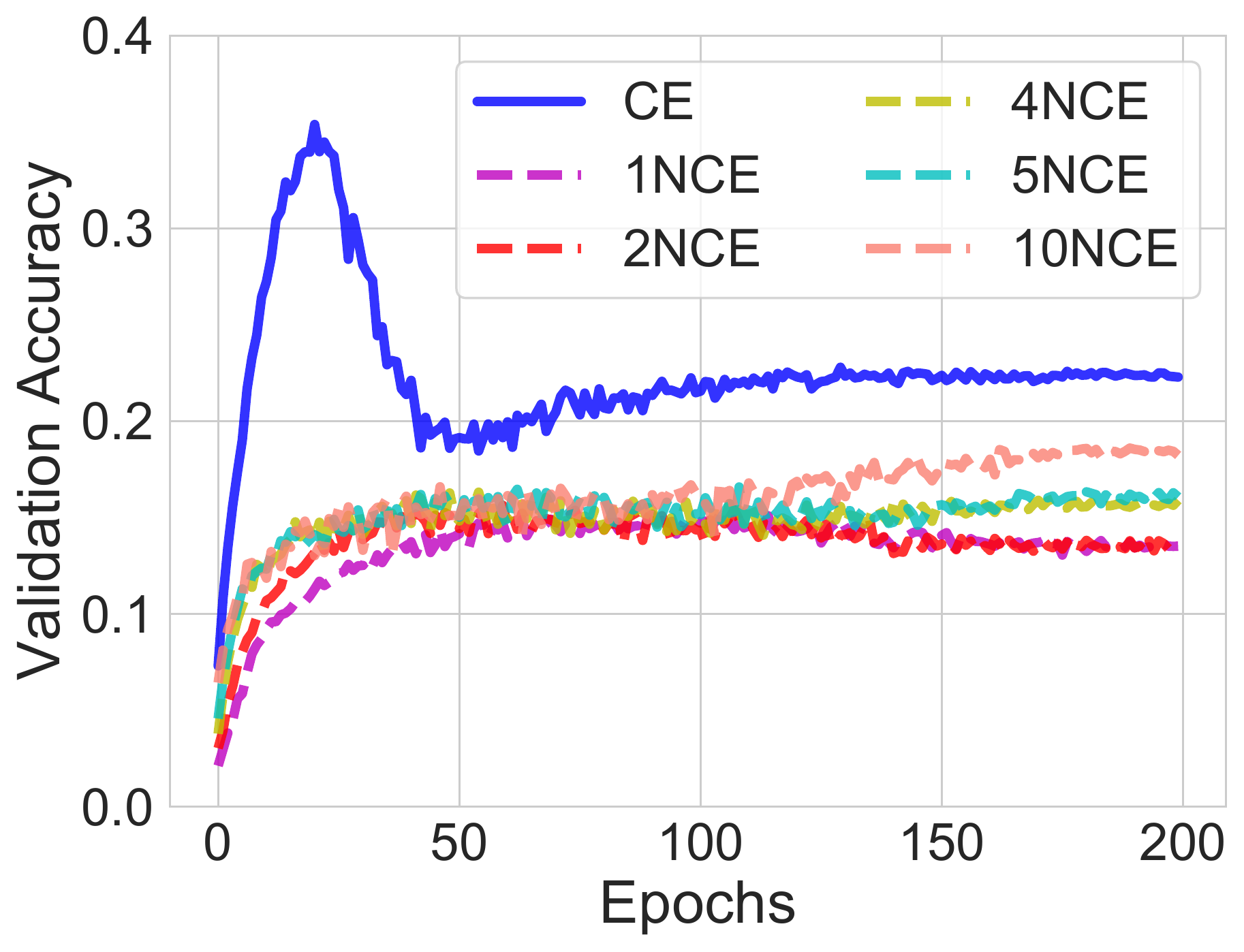}
		\caption{Scaled NCE}
		\label{nce_scale}
	\end{subfigure}
	\begin{subfigure}{0.48\linewidth} 
		\includegraphics[width=\textwidth]{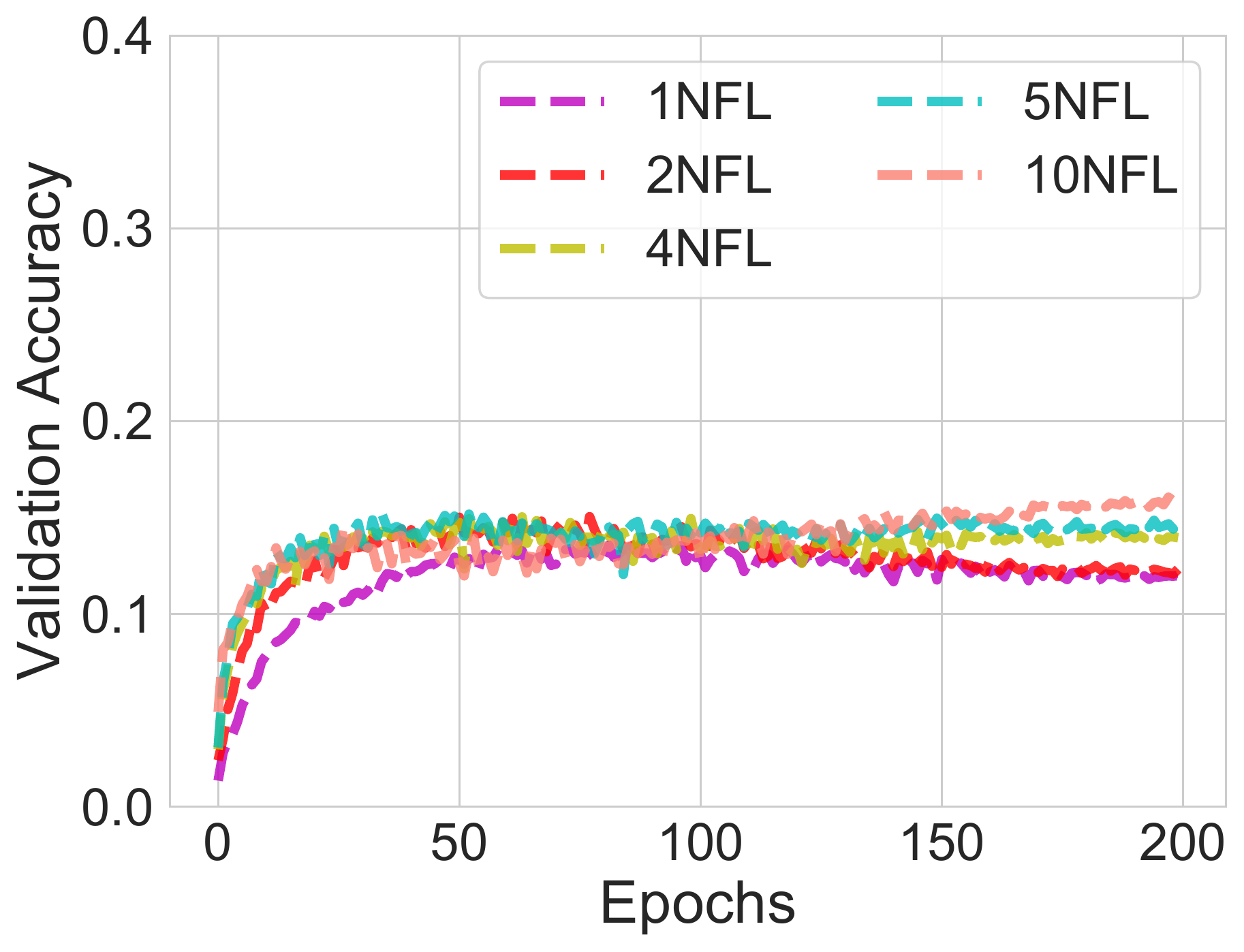}
		\caption{Scaled NFL}
		\label{nfl_scale}
	\end{subfigure}\\
	\begin{subfigure}{0.48\linewidth}
		\includegraphics[width=\textwidth]{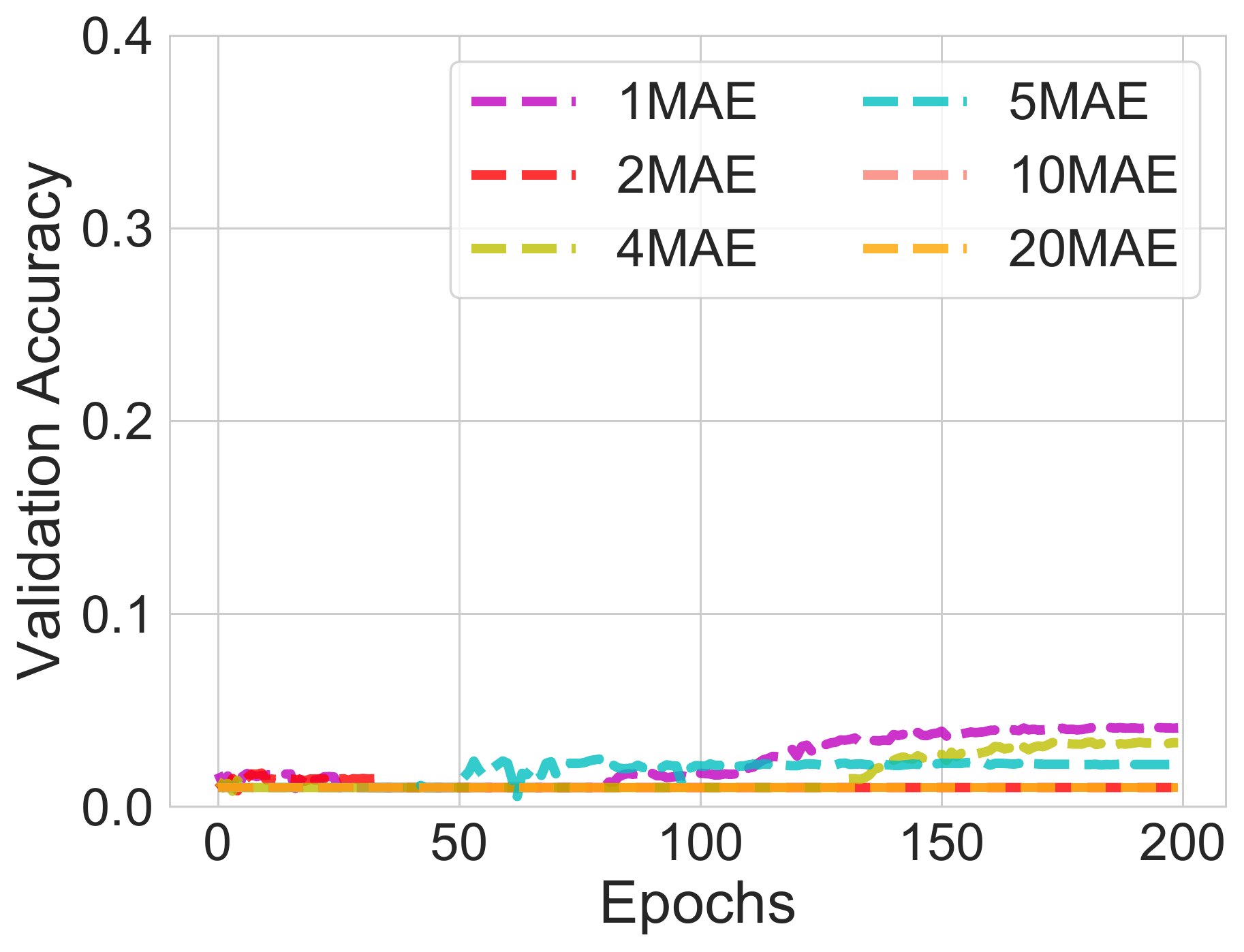}
		\caption{Scaled MAE}
		\label{mae_scale}
	\end{subfigure}
	\begin{subfigure}{0.48\linewidth} 
		\includegraphics[width=\textwidth]{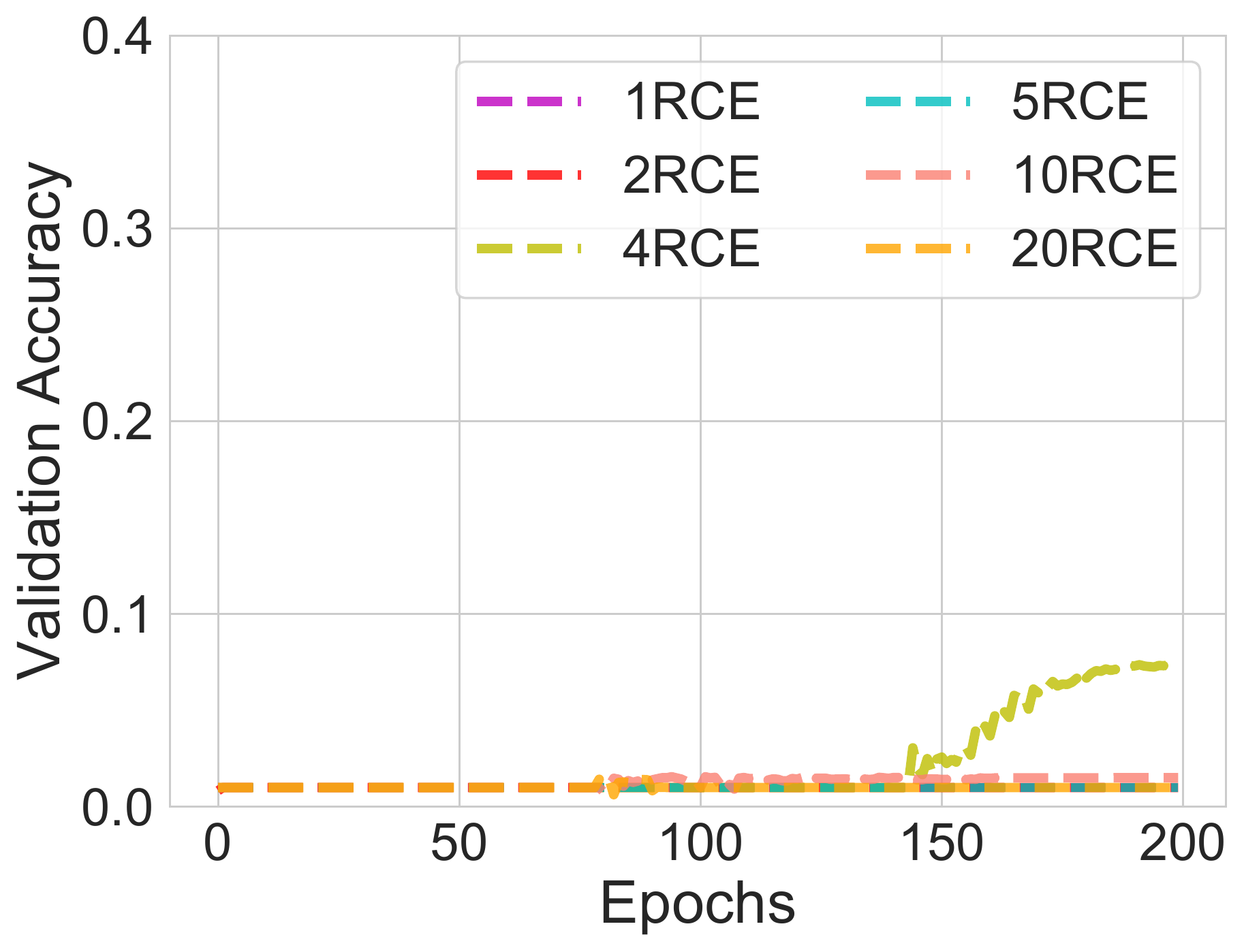}
		\caption{Scaled RCE}
		\label{rce_scale}
	\end{subfigure}
	\vspace{-0.1 in}
	\caption{Test accuracies of scaled loss functions on CIFAR-100 with 0.6 symmetric noise.}
 	\vspace{-0.15 in}
	\label{fig:3}
\end{figure}

\noindent\textbf{APL losses are both robust and learning sufficient.}
We show the effectiveness of ``Active+Passive" learning, compared to other forms of combinations. We run experiments on CIFAR-10 and CIFAR-100 under the same settings as above. The parameters $\alpha, \beta$ for our APL are simply set to 1.0 without any tuning.
As shown in Figure \ref{fig:4}, the 4 APL loss functions demonstrate a clear advantage over either AAL (``Active+Active Loss'') or PPL (``Passive+Passive Loss''), especially for sufficient learning (high accuracy).
The AAL and PPL loss functions are robust but still suffer from the underfitting problem. This highlights that the overfitting and underfitting problems can be addressed simultaneously by the joint of active and passive losses by our APL.

\textbf{Parameter Analysis of APL.} We tune the parameters $\alpha$ and $\beta$ for NCE+RCE loss, then directly use these parameters for all other APL losses. This is also done on CIFAR-10 and CIFAR-100 datasets under 0.6 symmetric noise. 
We test the combinations between $\alpha \in \{0.1, 1.0, 10.0\}$ and $\beta \in \{0.1, 1.0, 10.0, 100.0\}$, then select the optimal combination according to the validation accuracy on a randomly sampled validation set (20\% training data). As shown in Figure \ref{fig:5}, the optimal parameters for CIFAR-10 are $\alpha=1, \beta=1$, and CIFAR-100 are $\alpha=10, \beta=0.1$. In general, on more complex dataset (eg. CIFAR-100 $>$ CIFAR-10), it requires more active learning (eg. a large $\alpha$) and less passive learning (eg. a small $\beta$) to achieve good performance.

\begin{figure}[!t]
	\centering
	\begin{subfigure}{0.48\linewidth}
		\includegraphics[width=\textwidth]{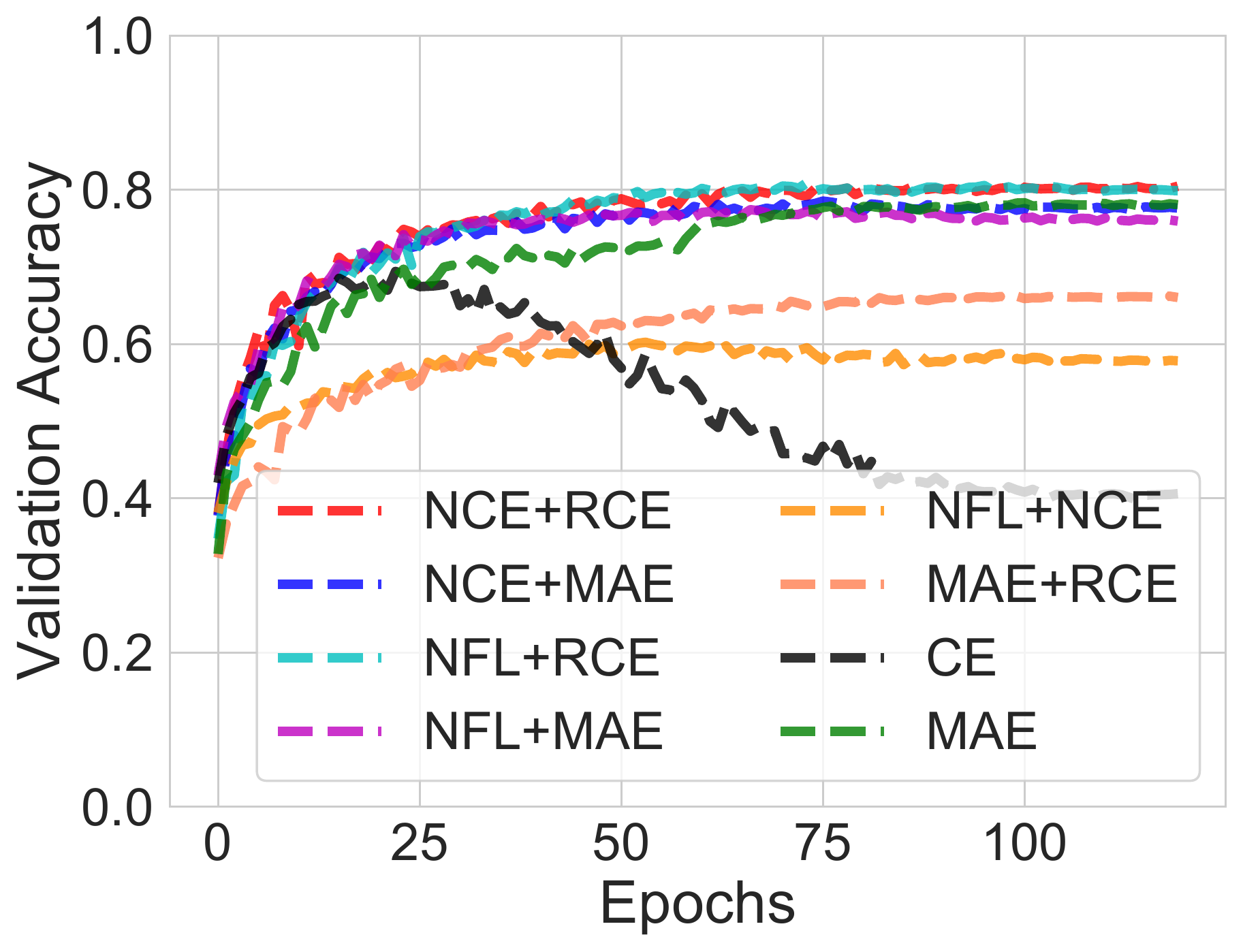}
		\caption{CIFAR-10}
		\label{ce_nce_combo}
	\end{subfigure}
	\begin{subfigure}{0.48\linewidth} 
		\includegraphics[width=\textwidth]{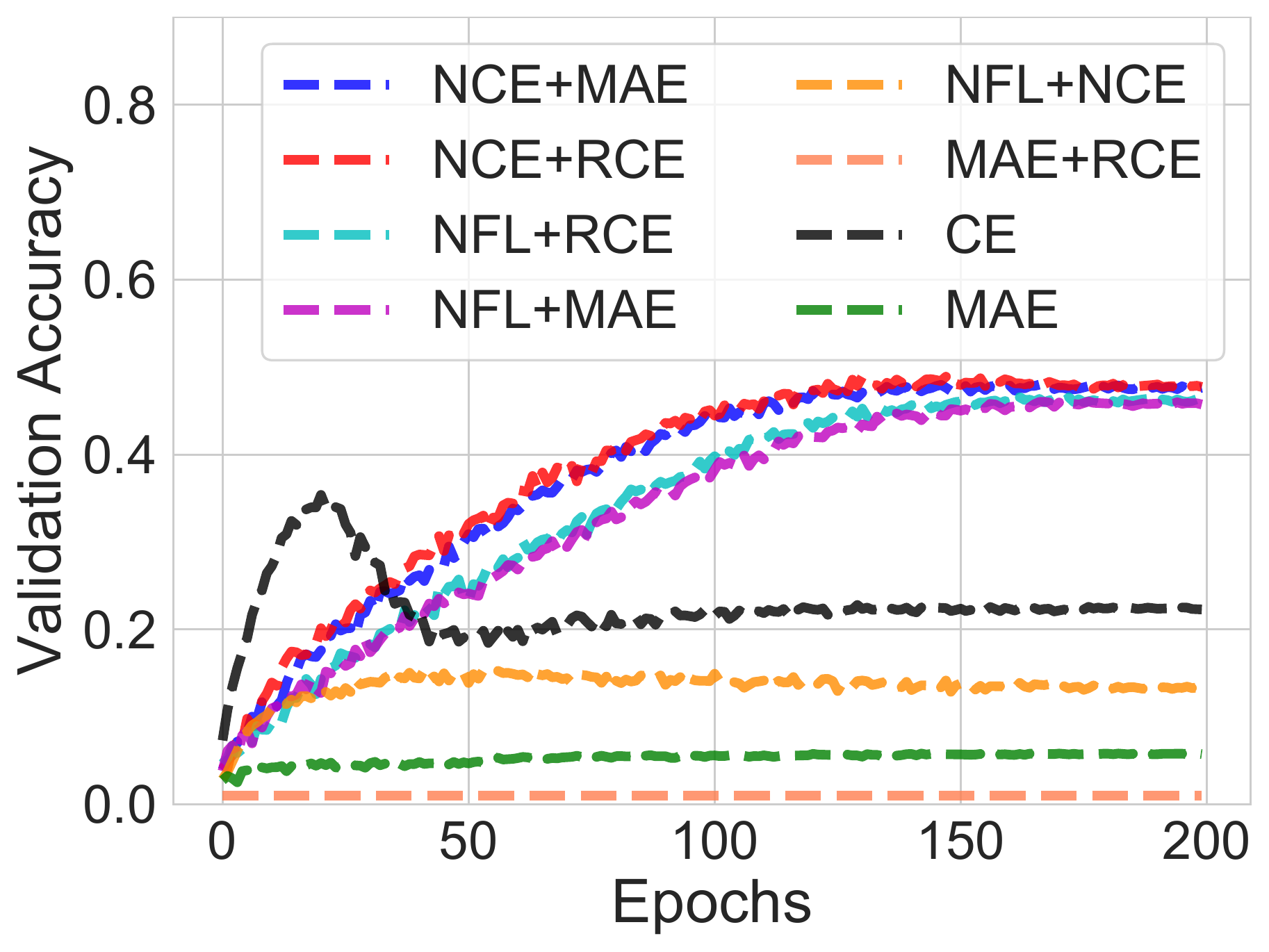}
		\caption{CIFAR-100}
		\label{fl_nfl_combo}
	\end{subfigure}
	\vspace{-0.1 in}
	\caption{Test accuracies of APL loss functions (NCE+MAE, NCE+RCE, NFL+MAE and NFL+RCE) versus ``AAL" loss (NFL+NCE) or ``PPL" loss (MAE+RCE) on CIFAR-10/CIFAR-100 with 0.6 symmetric noise.}
	\label{fig:4}
\end{figure}

\begin{figure}[!t]
	\centering
	\begin{subfigure}{0.48\linewidth}
		\includegraphics[width=\textwidth]{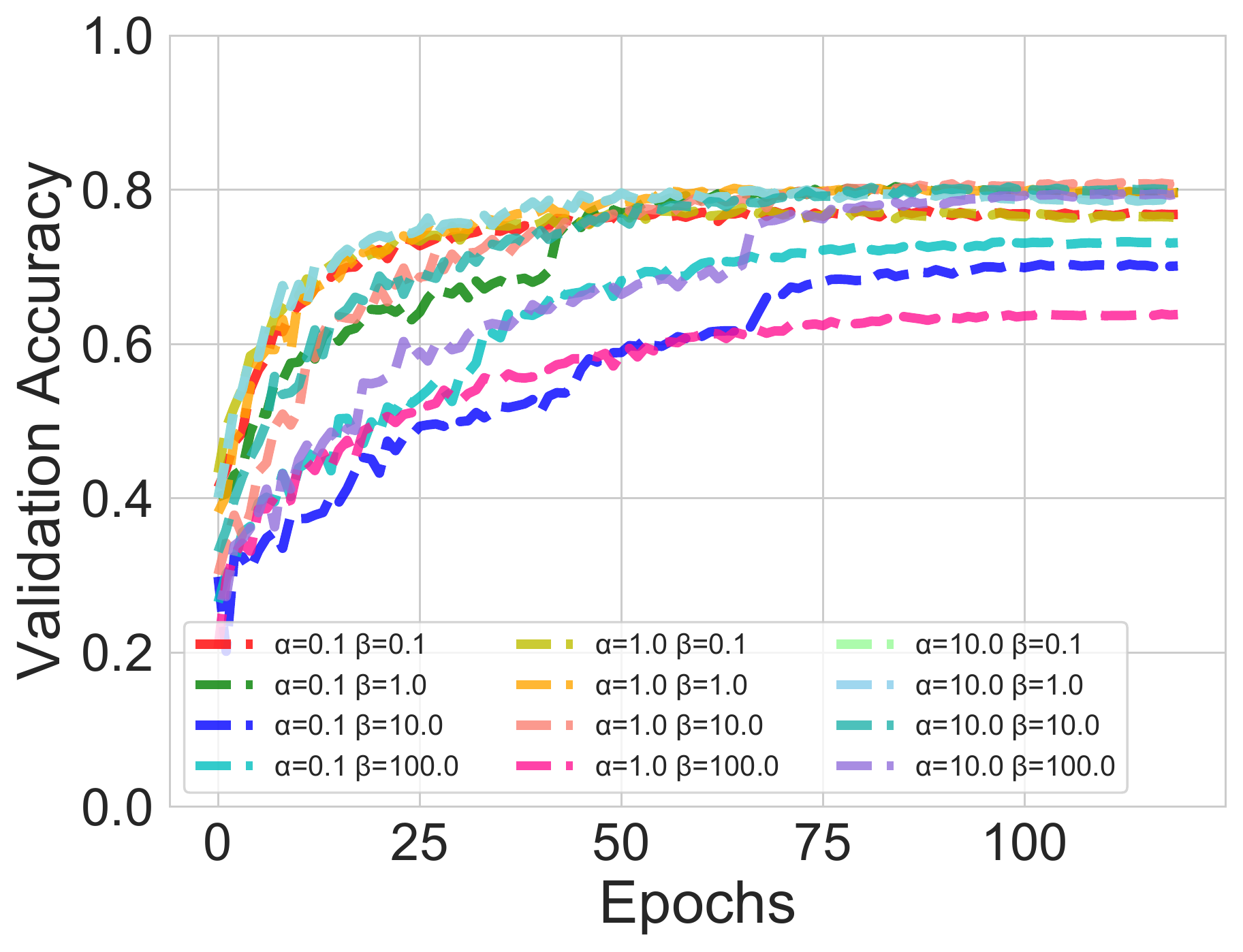}
		\caption{CIFAR-10 ($\eta=0.6$)}
	\end{subfigure}
	\begin{subfigure}{0.48\linewidth}
		\includegraphics[width=\textwidth]{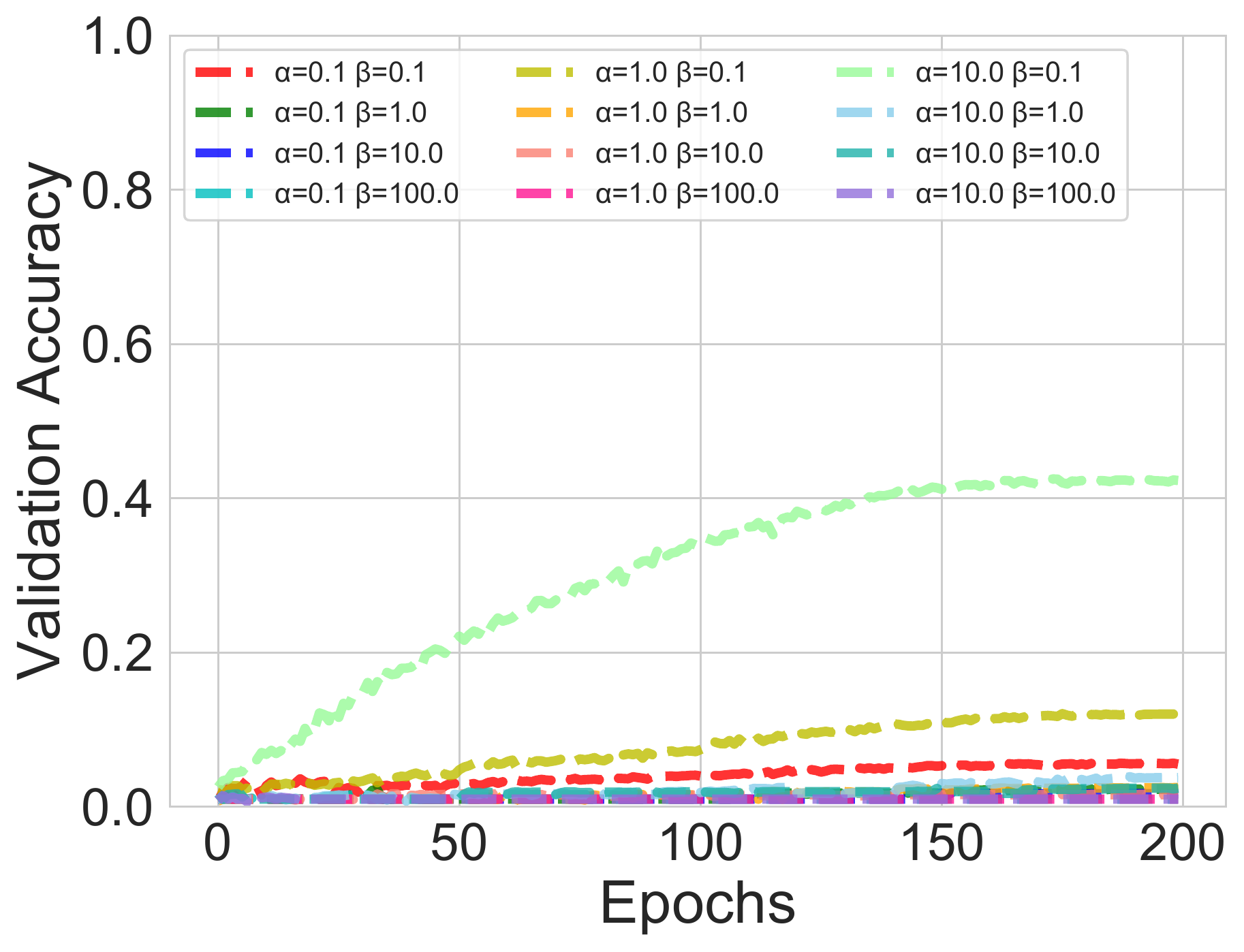}
		\caption{CIFAR-100 ($\eta=0.6$)}
	\end{subfigure}
	\vspace{-0.1 in}
	\caption{Validation accuracy of NCE+RCE loss with different parameters on CIFAR-10 and CIFAR-100 under symmetric noise.}
 	\vspace{-0.15 in}
	\label{fig:5}
\end{figure}

\begin{table*}[!ht]
\caption{Test accuracies (\%) of different methods on benchmark datasets with clean or symmetric label noise ($\eta \in [0.2, 0.8]$). The results (mean$\pm$std) are reported over 3 random runs and the top 2 best results are \textbf{boldfaced}.}
\label{tab:sym_robustness}
\centering
\small
\begin{tabular}{c|c|c|cccc}
\hline
\multirow{2}{*}{Datasets} & \multirow{2}{*}{Methods} & \multirow{2}{*}{Clean ($\eta$=0.0)} & \multicolumn{4}{c}{Symmetric Noise Rate ($\eta$)}\\
& & & 0.2 & 0.4 & 0.6 & 0.8 \\ \hline \hline
\multirow{10}{*}{\begin{tabular}[c]{@{}c@{}}\\MNIST\\ \end{tabular}} 
 & CE & $ 99.25\pm0.08 $  & $ 97.42\pm0.06 $  & $ 94.21\pm0.54 $  & $ 86.00\pm1.48 $  & $ 47.08\pm1.15 $  \\
 & FL & $ 99.30\pm0.02 $  & $ 97.45\pm0.19 $  & $ 94.71\pm0.25 $  & $ 85.76\pm1.85 $  & $ 49.77\pm2.26 $ \\
 & GCE & $ 99.27\pm0.01 $  & $99.18\pm0.06 $  & $ 98.72\pm0.05 $  & $ 97.43\pm0.23 $  & $ 12.77\pm2.00 $ \\
 & NLNL & $ 99.27\pm0.02 $  & $ 97.49\pm0.30 $  & $ 96.64\pm0.52 $  & $ 97.22\pm0.06 $  & $ 10.32\pm0.73 $ \\
 & SCE & $ 99.24\pm0.08 $  & $ 99.15\pm0.04 $  & $ 98.78\pm0.09 $  & $ 97.45\pm0.29 $  & $ 73.70\pm0.84 $ \\
 \cline{2-7}
& \textbf{NFL+MAE} & $ 99.39\pm0.04 $  & $ 99.12\pm0.06 $  & $ 98.74\pm0.14 $  & $ 96.91\pm0.09 $  & $ \boldsymbol{74.98\pm1.99} $ \\
& \textbf{NFL+RCE} & $ 99.38\pm0.02 $  & $ \boldsymbol{99.19\pm0.06} $  & $ \boldsymbol{98.79\pm0.10} $  & $ \boldsymbol{97.46\pm0.03} $  & $ 74.59\pm2.23 $ \\
& \textbf{NCE+MAE} & $ 99.37\pm0.02 $  & $ 99.14\pm0.05 $  & $ 98.78\pm0.00 $  & $ 96.76\pm0.34 $  & $ 74.66\pm1.11 $ \\
& \textbf{NCE+RCE} & $ 99.37\pm0.02 $  & $ \boldsymbol{99.20\pm0.04} $  & $ \boldsymbol{98.79\pm0.12} $  & $ \boldsymbol{97.48\pm0.13} $  & $ \boldsymbol{75.18\pm1.19} $ \\
\hline
\multirow{10}{*}{\begin{tabular}[c]{@{}c@{}}\\CIFAR-10\\ \end{tabular}} 
 & CE & $ 90.36\pm0.03 $  & $ 75.90\pm0.28 $  & $ 60.28\pm0.27 $  & $ 40.90\pm0.35 $  & $ 19.65\pm0.46 $ \\
 & FL & $ 89.63\pm0.25 $  & $ 74.59\pm0.49 $  & $ 57.55\pm0.39 $  & $ 38.91\pm0.62 $  & $ 19.43\pm0.27 $ \\
 & GCE & $ 89.38\pm0.23 $  & $ 87.27\pm0.21 $  & $ 83.33\pm0.39 $  & $ 72.00\pm0.37 $  & $ 29.08\pm0.80 $ \\
  & NLNL & $ 91.93\pm0.20 $  & $ 83.98\pm0.18 $  & $ 76.58\pm0.44 $  & $ 72.85\pm0.39 $  & $ 51.41\pm0.85 $ \\
 & SCE & $ 91.30\pm0.22 $  & $ 88.05\pm0.26 $  & $ 82.06\pm0.24 $  & $ 66.08\pm0.25 $  & $ 30.69\pm0.63 $ \\
 \cline{2-7}
& \textbf{NFL+MAE} & $ 89.25\pm0.19 $  & $ 87.33\pm0.14 $  & $ 83.81\pm0.06 $  & $ 76.36\pm0.31 $  & $ 45.23\pm0.52 $ \\
& \textbf{NFL+RCE} & $ 90.91\pm0.02 $  & $ \boldsymbol{89.14\pm0.13} $  & $ \boldsymbol{86.05\pm0.12} $  & $ \boldsymbol{79.78\pm0.13} $  & $ \boldsymbol{55.06\pm1.08} $ \\
& \textbf{NCE+MAE} & $ 88.83\pm0.34 $  & $ 87.12\pm0.21 $  & $ 84.19\pm0.43 $  & $ 77.61\pm0.05 $  & $ 49.62\pm0.72 $ \\
& \textbf{NCE+RCE} & $ 90.76\pm0.22 $  & $ \boldsymbol{89.22\pm0.27} $  & $ \boldsymbol{86.02\pm0.09} $  & $ \boldsymbol{79.78\pm0.50} $  & $ \boldsymbol{52.71\pm1.90} $ \\
\hline \hline
\multirow{10}{*}{\begin{tabular}[c]{@{}c@{}}\\CIFAR-100\\ \end{tabular}} 
 & CE & $ 70.89\pm0.22 $  & $ 56.99\pm0.41 $  & $ 41.40\pm0.36 $  & $ 22.15\pm0.40 $  & $ 7.58\pm0.44 $ \\
 & FL & $ 70.61\pm0.44 $  & $ 56.10\pm0.48 $  & $ 40.77\pm0.62 $  & $ 22.14\pm1.00 $  & $ 7.21\pm0.25 $ \\
 & GCE & $ 69.00\pm0.56 $  & $ 65.24\pm0.56 $  & $ 58.94\pm0.50 $  & $ 45.18\pm0.93 $  & $ 16.18\pm0.46 $ \\
 & NLNL & $ 68.72\pm0.60 $  & $ 46.99\pm0.91 $  & $ 30.29\pm1.64 $  & $ 16.60\pm0.90 $  & $ 11.01\pm2.48 $ \\
 & SCE & $ 70.38\pm0.45 $  & $ 55.39\pm0.18 $  & $ 39.99\pm0.59 $  & $ 22.35\pm0.65 $  & $ 7.57\pm0.28 $ \\
 \cline{2-7}
& \textbf{NFL+MAE} & $ 67.98\pm0.52 $  & $ 63.58\pm0.09 $  & $ 58.18\pm0.08 $  & $ 46.10\pm0.50 $  & $ 24.78\pm0.82 $ \\
& \textbf{NFL+RCE} & $ 68.23\pm0.62 $  & $ 64.52\pm0.35 $  & $ 58.20\pm0.31 $  & $ 46.30\pm0.45 $  & $ 25.16\pm0.55 $ \\
& \textbf{NCE+MAE} & $ 68.75\pm0.54 $  & $ \boldsymbol{65.25\pm0.62} $  & $ \boldsymbol{59.22\pm0.36} $  & $ \boldsymbol{48.06\pm0.34} $  & $ \boldsymbol{25.50\pm0.76} $ \\
& \textbf{NCE+RCE} & $ 69.02\pm0.11 $  & $ \boldsymbol{65.31\pm0.07} $  & $ \boldsymbol{59.48\pm0.56} $  & $ \boldsymbol{47.12\pm0.62} $  & $ \boldsymbol{25.80\pm1.12} $ \\
\hline
\end{tabular}
\end{table*}

\begin{table*}[!t]
\caption{Test accuracy (\%) of different methods on benchmark datasets with clean or asymmetric label noise ($\eta \in [0.1, 0.4]$). The results (mean$\pm$std) are reported over 3 random runs and the top 2 best results are \textbf{boldfaced}.}
\label{tab:asym_robustness}
\centering
\small
\begin{tabular}{c|c|cccc}
\hline
\multirow{2}{*}{Datasets} & \multirow{2}{*}{Methods} & \multicolumn{4}{c}{Asymmetric Noise Rate ($\eta$)}\\
& &  0.1 & 0.2 & 0.3 & 0.4 \\ \hline \hline
\multirow{10}{*}{\begin{tabular}[c]{@{}c@{}}\\MNIST\\ \end{tabular}} 
 & CE & $ 98.53\pm0.11 $  & $ 96.75\pm0.31 $  & $ 92.98\pm1.41 $  & $ 85.74\pm2.70 $  \\
 & FL & $ 98.97\pm0.10 $  & $ 98.35\pm0.17 $  & $ 96.57\pm0.36 $  & $ 91.18\pm2.02 $ \\
 & GCE & $99.25\pm0.03 $  & $ 99.11\pm0.04 $  & $ 96.99\pm0.53 $  & $ 88.56\pm2.40 $ \\
 & NLNL & $ 98.38\pm0.17 $  & $ 95.98\pm0.58 $  & $ 91.52\pm1.14 $  & $ 86.36\pm0.40 $  \\
 & SCE & $ 99.15\pm0.07 $  & $ 99.05\pm0.05 $  & $ 97.96\pm0.40 $  & $ 91.89\pm3.32 $ \\
 \cline{2-6}
& \textbf{NFL+MAE} & $ 99.31\pm0.05 $  & $ 99.09\pm0.12 $  & $ 97.88\pm0.16 $  & $ \boldsymbol{93.52\pm0.19} $ \\
& \textbf{NFL+RCE} & $ \boldsymbol{99.33\pm0.06} $  & $ 99.13\pm0.01 $  & $ \boldsymbol{97.99\pm0.05} $  & $ \boldsymbol{93.59\pm0.82} $ \\
& \textbf{NCE+MAE} & $ 99.26\pm0.02 $  & $ \boldsymbol{99.21\pm0.04} $  & $ \boldsymbol{98.99\pm0.03} $  & $ 93.40\pm1.28 $ \\
& \textbf{NCE+RCE} & $ \boldsymbol{99.34\pm0.06} $  & $ \boldsymbol{99.17\pm0.02} $  & $ 97.94\pm0.21 $  & $ 93.12\pm1.17 $ \\
\hline
\multirow{10}{*}{\begin{tabular}[c]{@{}c@{}}\\CIFAR-10\\ \end{tabular}} 
 & CE & $ 87.38\pm0.16 $  & $ 83.62\pm0.15 $  & $ 79.38\pm0.28 $  & $ 75.00\pm0.50 $ \\
 & FL & $ 86.35\pm0.30 $  & $ 82.97\pm0.14 $  & $ 79.48\pm0.21 $  & $ 74.60\pm0.15 $ \\
 & GCE & $ 88.42\pm0.07 $  & $ 86.07\pm0.31 $  & $ 80.78\pm0.21 $  & $ 74.98\pm0.32 $ \\
 & NLNL & $ 88.54\pm0.25 $  & $ 84.74\pm0.08 $  & $ 81.26\pm0.43 $  & $ 76.97\pm0.52 $  \\
 & SCE & $ 88.13\pm0.21 $  & $ 83.92\pm0.07 $  & $ 79.70\pm0.27 $  & $ 78.20\pm0.03 $ \\
 \cline{2-6}
& \textbf{NFL+MAE} & $ 88.46\pm0.20 $  & $ 86.81\pm0.32 $  & $ 83.91\pm0.34 $  & $ 77.16\pm0.10 $\\
& \textbf{NFL+RCE} & $ \boldsymbol{90.20\pm0.15} $  & $ \boldsymbol{88.73\pm0.29} $  & $ \boldsymbol{85.74\pm0.22} $  & $ \boldsymbol{79.27\pm0.43} $\\
& \textbf{NCE+MAE} & $ 88.25\pm0.09 $  & $ 86.44\pm0.23 $  & $ 83.98\pm0.52 $  & $ 78.23\pm0.42 $  \\
& \textbf{NCE+RCE} & $ \boldsymbol{89.95\pm0.20} $  & $ \boldsymbol{88.56\pm0.17} $  & $ \boldsymbol{85.58\pm0.44} $  & $ \boldsymbol{79.59\pm0.40} $ \\
\hline \hline
\multirow{10}{*}{\begin{tabular}[c]{@{}c@{}}\\CIFAR-100\\ \end{tabular}} 
 & CE & $ 65.42\pm0.22 $  & $ 58.45\pm0.45 $  & $ 51.09\pm0.29 $  & $ 41.68\pm0.45 $  \\
 & FL & $ 64.79\pm0.18 $  & $ 58.59\pm0.81 $  & $ 51.26\pm0.18 $  & $ 42.15\pm0.44 $ \\
 & GCE & $ 61.98\pm0.81 $  & $ 59.99\pm0.83 $  & $ 53.99\pm0.29 $  & $ 41.49\pm0.79 $ \\
 & NLNL & $ 59.55\pm1.22 $  & $ 50.19\pm0.56 $  & $ 42.81\pm1.13 $  & $ 35.10\pm0.20 $ \\
 & SCE & $ 64.15\pm0.61 $  & $ 58.22\pm0.47 $  & $ 49.85\pm0.91 $  & $ 42.19\pm0.19 $ \\
 \cline{2-6}
& \textbf{NFL+MAE} & $ \boldsymbol{66.06\pm0.23} $  & $ \boldsymbol{63.10\pm0.22} $  & $ 56.19\pm0.61 $  & $ 43.51\pm0.42 $ \\
& \textbf{NFL+RCE} & $ \boldsymbol{66.13\pm0.31} $  & $ \boldsymbol{63.12\pm0.41} $  & $ 54.72\pm0.38 $  & $ 42.97\pm1.03 $ \\
& \textbf{NCE+MAE} & $ 65.71\pm0.34 $  & $ 62.38\pm0.60 $  & $ \boldsymbol{58.02\pm0.48} $  & $ \boldsymbol{47.22\pm0.30} $ \\
& \textbf{NCE+RCE} & $ 65.68\pm0.25 $  & $ 62.68\pm0.79 $  & $ \boldsymbol{57.82\pm0.41} $  & $ \boldsymbol{46.79\pm0.96} $ \\
\hline
\end{tabular}
\end{table*}

\subsection{Evaluation on Benchmark Datasets}\label{sec:benckmark_robust}

\noindent\textbf{Baselines.} We consider 3 state-of-the-art methods: 1) Generalized Cross Entropy (GCE) \cite{zhang2018generalized}; 2) Negative Learning for Noisy Labels (NLNL) \cite{kim2019nlnl}; and 3) Symmetric Cross Entropy (SCE) \cite{wang2019symmetric}. For APL, we consider 4 loss functions: 1) NCE+MAE, 2) NCE+RCE, 3) NFL+MAE and 4) NFL+RCE. We also train networks using CE and FL losses.

\noindent\textbf{Noise generation.}
The noisy labels are generated following standard approaches in previous works \cite{patrini2017making,ma2018dimensionality}.
Symmetric noise is generated by flipping labels in each class randomly to incorrect labels of other classes. For asymmetric noise, we flip the labels within a specific set of classes. For CIFAR-10, flipping TRUCK $\to$ AUTOMOBILE, BIRD $\to$ AIRPLANE, DEER $\to$ HORSE, CAT $\leftrightarrow$ DOG. For CIFAR-100, the 100 classes are grouped into 20 super-classes with each has 5 sub-classes, we then flip each class within the same super-class into the next in a circular fashion.
We vary the noise rate $\eta \in [0.2, 0.8]$ for symmetric noise, and $\eta \in [0.1, 0.4]$ for asymmetric noise.

\noindent\textbf{Networks and training.}
We use a 4-layer CNN for MNIST, an 8-layer CNN for CIFAR-10 and a ResNet-34 for CIFAR-100. We train the networks for 50, 120 and 200 epochs for MNIST, CIFAR-10, and CIFAR-100, respectively. For all the training, we use SGD optimizer with momentum 0.9 and cosine learning rate annealing. Weight decay is set to $1\times 10^{-3}$, $1\times 10^{-4}$ and $1\times 10^{-5}$ for MNIST, CIFAR-10 and CIFAR-100, respectively. The initial learning rate is set to 0.01 for MNIST/CIFAR-10 and 0.1 for CIFAR-100. Typical data augmentations including random width/height shift and horizontal flip are applied.

\noindent\textbf{Parameter setting.}
We tune the parameters for all baseline methods and find that the optimal settings match their original papers. Specifically, for GCE, we set $\rho=0.7$ (see detailed definition in Section \ref{sec:ngce}). For SCE, we set $A = -4$, and $\alpha = 0.01, \beta = 1.0$ for MNIST, $\alpha = 0.1, \beta = 1.0$ for CIFAR-10, $\alpha = 6.0, \beta = 0.1$ for CIFAR-100. For FL, we set $\gamma=0.5$. For our APL losses, we empirically set $\alpha = 1, \beta = 100$ for MNIST, $\alpha,\beta = 1$ for CIFAR-10, and $\alpha = 10, \beta = 0.1$ for CIFAR-100.

\noindent\textbf{Results.} The classification accuracies under symmetric label noise are reported in Table \ref{tab:sym_robustness}.
As can be seen, our APL loss functions achieved the top 2 best results in all test scenarios across all datasets. 
The superior performance of APL losses is more pronounced when the noise rates are extremely high and the dataset is more complex. For example, on CIFAR-10 with 0.6 symmetric noise, our APL losses NFL+RCE and NCE+RCE outperform the state-of-the-art robustness (72.85\% of NLNL) by more than 6\%.
On CIFAR-100 with 0.8 symmetric noise where CE and FL both fail to converge, our NCE+MAE and NCE+RCE outperform the state-of-the-art methods GCE and NLNL by at least 9\%. In several cases, all our 4 APL losses are better than baseline methods.

Results for asymmetric noise are reported in Table \ref{tab:asym_robustness}. Again, all top 2 best results are achieved by our APL loss functions across different datasets and noise rates. On CIFAR-100 with 0.4 asymmetric noise, the highest accuracy that can be achieved by current methods is 42.19\% (by SCE), which is still 5\% lower than our NCE+MAE and 4\% lower than our NCE+RCE. Comparing results in both Table \ref{tab:sym_robustness} and Table \ref{tab:asym_robustness}, we find that the best combination of our APL loss varies across different datasets, but within the same dataset, is quite consistent across different noise types and noise rates.

Overall, NCE+RCE demonstrates a consistently strong performance across different datasets. 
The strong performance of our APL losses verifies the importance of theoretically guaranteed robustness and ``Active+Passive" learning.
Our proposed APL framework can be used as a general principle for developing new robust loss functions.

\begin{table}[!ht]
\vspace{-0.1 in}
    \small
    \centering
    \caption{Test accuracy (\%) of APL losses NGCE+MAE and NGCE+RCE on CIFAR-10 under both symmetric and asymmetric noise. The top-2 best results are in \textbf{bold}.}
    \label{tab:ngce_plus}
    \begin{adjustbox}{width=0.49\textwidth}
    \begin{tabular}{l|cc|c}
    \hline
    \multirow{2}{*}{Methods}
    & \multicolumn{2}{c|}{Symmetric noise}  & \multicolumn{1}{c}{Asymmetric noise} \\
     & 0.4 & 0.8  & 0.4  \\
    \hline
    GCE & $ 83.33\pm0.39 $ & $ 29.08\pm0.80 $ & $ \boldsymbol{74.98\pm0.32} $ \\
    \textbf{NGCE+MAE} & $\boldsymbol{ 84.14\pm0.15 }$ & $\boldsymbol{ 50.55\pm1.08 } $ & $\boldsymbol{ 76.55\pm0.48 }$ \\
    \textbf{NGCE+RCE} & $\boldsymbol{ 85.76\pm0.26 }$ & $\boldsymbol{ 44.69\pm4.93 }$ & $ 71.65\pm0.68 $ \\
    \hline
    \end{tabular}
    \end{adjustbox}
\vspace{-0.1 in}
\end{table}

\subsection{Improving New Loss Functions using APL}\label{sec:ngce}
Next, we take GCE \cite{zhang2018generalized} as an example and show how to improve a new loss function using our APL framework.
Given a sample $\xx$, GCE loss is defined as: $GCE = \sum_{k=1}^{K} \qq(k|\xx)\frac{1-\pp(k|\xx)^\rho}{\rho}$,
where $\rho \in (0, 1]$. GCE reduces to the MAE/unhinged loss and CE loss when $\rho=1$ and $\rho \rightarrow 0$, respectively.
Following Eq. \eqref{eq:normalized_loss}, the Normalized Generalized Cross Entropy (NGCE) loss can be defined as:
$NGCE=(1-\pp(y|\xx)^{\rho})/(K - \sum_{k=1}^{K}\pp(k|\xx)^{\rho}).$

Both GCE and NGCE are active loss functions (eg. $\ell(f(\xx), k) = 0, \forall k \neq y$). Thus, following our APL in Eq. \eqref{eq:apl}, we can define two APL losses for NGCE: 1) NGCE+MAE and 2) NGCE+RCE. Here, we simply set $\alpha,\beta=1.0$ for both APL losses. We compare their performance to GCE (with $\rho=0.7$) on CIFAR-10 under both symmetric and asymmetric noise. As shown in Table \ref{tab:ngce_plus}, both NGCE+MAE and NGCE+RCE can improve the performance of GCE under different noise settings, except for NGCE+RCE under 0.4 asymmetric noise. Particularly, under 0.8 symmetric noise, NGCE+MAE is able to improve GCE by $>20\%$. A new loss function may have multiple terms, in this case, we can normalize its non-robust terms, and then add an active or passive loss into the loss function if there are missing.

\begin{table}[h]
\centering
\small
\caption{Top-1 validation accuracies (\%) on clean ILSVRC12 validation set of ResNet-50 models trained on WebVision using different loss functions, under the Mini setting in \cite{jiang2018mentornet}. The top-2 best results are in \textbf{bold}.}

\label{tab:webvision}
\begin{adjustbox}{width=0.49\textwidth}
\begin{tabular}{l|ccccc}
\hline
Loss & CE & GCE & SCE & \textbf{NCE+MAE} & \textbf{NCE+RCE}\\ \hline
Acc & 58.88 & 53.68 &  61.76 & $\bm{62.36}$ & $ \bm{62.64} $\\

\hline
\end{tabular}
\end{adjustbox}
\vspace{-0.1 in}
\end{table}

\subsection{Effectiveness on Real-world Noisy Labels}\label{sec:webvision}
Here, we test the effectiveness of our APL loss functions on large-scale real-world noisy dataset WebVision \cite{li2017webvision}. WebVision contains 2.4 million images of real-world noisy labels, crawled from the web (eg. Flickr and Google) based on the 1,000 class labels of ImageNet ILSVRC12 \cite{deng2009imagenet}. Here, we follow the ``Mini" setting in \cite{jiang2018mentornet} that only takes the first 50 classes of the Google resized image subset. We evaluate the trained networks on the same 50 classes of the ILSVRC12 validation set, which can be considered as a clean validation.
We compare our APL losses NCE+MAE and NCE+RCE with GCE and SCE. 
For each loss, we train a ResNet-50 \cite{he2016deep} using SGD for 250 epochs with initial learning rate 0.4, nesterov momentum 0.9 and weight decay $3\times 10^{-5}$ and batch size 512. The learning rate is multiplied by 0.97 after every epoch of training. 
We resize the images to $224 \times 224$. Typical data augmentations including random width/height shift, color jittering and random horizontal flip are applied. For GCE, we use the suggested $\alpha=0.7$, while for SCE, we use the setting with $A=-4, \alpha=10.0, \beta=1.0$. For our two APL losses, we set $\alpha=50.0, \beta=0.1$ for NCE+RCE and $\alpha=50.0, \beta=1.0$ for NCE+MAE.
The top-1 validation accuracies of different loss functions on the clean ILSVRC12 validation set (eg. only the first 50 classes) are reported in Table \ref{tab:webvision}. As can be observed, both our APL losses outperform existing loss functions GCE and SCE by a clear margin. This verifies the effectiveness of our APL against real-world label noise.

\section{Conclusions}\label{sec:conclusion}
In this paper, we revisited the robustness and sufficient learning properties of existing loss functions for deep learning with noisy labels. We revealed a new theoretical insight into robust loss functions that: \emph{a simple normalization can make any loss function robust to noisy labels}. Then, we highlighted that robustness alone is not enough for a loss function to train accurate DNNs, and existing robust loss functions all suffer from an underfitting problem. To address this problem, we characterize existing robust loss functions into ``Active" or ``Passive" losses, and then proposed a mutually boosted framework \emph{Active Passive Loss} (APL). APL allows us to create a family of new loss functions that not only have theoretically guaranteed robustness but also are effective for sufficient learning.
We empirically verified the excellent performance of our APL loss functions compared to state-of-the-art methods on benchmark datasets.
Our APL framework can serve as a basic principle for developing new robust loss functions.

\section*{Acknowledgements}
This research was undertaken using the LIEF HPC-GPU Facility hosted at the University of Melbourne with the assistance of LIEF Grant LE170100200.

\vfill\pagebreak
\bibliography{icml2020}

\begin{thebibliography}{35}
\providecommand{\natexlab}[1]{#1}
\providecommand{\url}[1]{\texttt{#1}}
\expandafter\ifx\csname urlstyle\endcsname\relax
  \providecommand{\doi}[1]{doi: #1}\else
  \providecommand{\doi}{doi: \begingroup \urlstyle{rm}\Url}\fi

\bibitem[Charoenphakdee et~al.(2019)Charoenphakdee, Lee, and
  Sugiyama]{charoenphakdee2019symmetric}
Charoenphakdee, N., Lee, J., and Sugiyama, M.
\newblock On symmetric losses for learning from corrupted labels.
\newblock In \emph{ICML}, pp.\  961--970, 2019.

\bibitem[Deng et~al.(2009)Deng, Dong, Socher, Li, Li, and
  Fei-Fei]{deng2009imagenet}
Deng, J., Dong, W., Socher, R., Li, Li, J., Li, K., and Fei-Fei, L.
\newblock Imagenet: A large-scale hierarchical image database.
\newblock In \emph{CVPR}, 2009.

\bibitem[Ghosh et~al.(2017)Ghosh, Kumar, and Sastry]{ghosh2017robust}
Ghosh, A., Kumar, H., and Sastry, P.
\newblock Robust loss functions under label noise for deep neural networks.
\newblock In \emph{AAAI}, 2017.

\bibitem[Goldberger \& Ben-Reuven(2017)Goldberger and
  Ben-Reuven]{goldberger2016training}
Goldberger, J. and Ben-Reuven, E.
\newblock Training deep neural-networks using a noise adaptation layer.
\newblock In \emph{ICLR}, 2017.

\bibitem[Han et~al.(2018{\natexlab{a}})Han, Yao, Niu, Zhou, Tsang, Zhang, and
  Sugiyama]{han2018masking}
Han, B., Yao, J., Niu, G., Zhou, M., Tsang, I., Zhang, Y., and Sugiyama, M.
\newblock Masking: A new perspective of noisy supervision.
\newblock In \emph{NeurIPS}, 2018{\natexlab{a}}.

\bibitem[Han et~al.(2018{\natexlab{b}})Han, Yao, Yu, Niu, Xu, Hu, Tsang, and
  Sugiyama]{han2018co}
Han, B., Yao, Q., Yu, X., Niu, G., Xu, M., Hu, W., Tsang, I., and Sugiyama, M.
\newblock Co-teaching: robust training deep neural networks with extremely
  noisy labels.
\newblock In \emph{NeurIPS}, 2018{\natexlab{b}}.

\bibitem[He et~al.(2016)He, Zhang, Ren, and Sun]{he2016deep}
He, K., Zhang, X., Ren, S., and Sun, J.
\newblock Deep residual learning for image recognition.
\newblock In \emph{CVPR}, 2016.

\bibitem[Jiang et~al.(2018)Jiang, Zhou, Leung, Li, and
  Fei-Fei]{jiang2018mentornet}
Jiang, L., Zhou, Z., Leung, T., Li, L.-J., and Fei-Fei, L.
\newblock Mentornet: Learning data-driven curriculum for very deep neural
  networks on corrupted labels.
\newblock In \emph{ICML}, 2018.

\bibitem[Kim et~al.(2019)Kim, Yim, Yun, and Kim]{kim2019nlnl}
Kim, Y., Yim, J., Yun, J., and Kim, J.
\newblock Nlnl: Negative learning for noisy labels.
\newblock In \emph{CVPR}, pp.\  101--110, 2019.

\bibitem[Krizhevsky \& Hinton(2009)Krizhevsky and
  Hinton]{krizhevsky2009learning}
Krizhevsky, A. and Hinton, G.
\newblock Learning multiple layers of features from tiny images.
\newblock \emph{Technical Report, University of Toronto}, 2009.

\bibitem[Kumar \& Ithapu(2019)Kumar and Ithapu]{kumar2019secost}
Kumar, A. and Ithapu, V.~K.
\newblock Secost: Sequential co-supervision for weakly labeled audio event
  detection.
\newblock \emph{arXiv preprint arXiv:1910.11789}, 2019.

\bibitem[LeCun et~al.(1998)LeCun, Bottou, Bengio, and
  Haffner]{lecun1998gradient}
LeCun, Y., Bottou, L., Bengio, Y., and Haffner, P.
\newblock Gradient-based learning applied to document recognition.
\newblock \emph{Proceedings of the IEEE}, 86\penalty0 (11):\penalty0
  2278--2324, 1998.

\bibitem[Lee et~al.(2017)Lee, He, Zhang, and Yang]{lee2017cleannet}
Lee, K.-H., He, X., Zhang, L., and Yang, L.
\newblock Cleannet: Transfer learning for scalable image classifier training
  with label noise.
\newblock \emph{arXiv preprint arXiv:1711.07131}, 2017.

\bibitem[Li et~al.(2017{\natexlab{a}})Li, Wang, Li, Agustsson, and
  Van~Gool]{li2017webvision}
Li, W., Wang, L., Li, W., Agustsson, E., and Van~Gool, L.
\newblock Webvision database: Visual learning and understanding from web data.
\newblock \emph{arXiv preprint arXiv:1708.02862}, 2017{\natexlab{a}}.

\bibitem[Li et~al.(2017{\natexlab{b}})Li, Yang, Song, Cao, Luo, and
  Li]{li2017learning}
Li, Y., Yang, J., Song, Y., Cao, L., Luo, J., and Li, J.
\newblock Learning from noisy labels with distillation.
\newblock In \emph{ICCV}, 2017{\natexlab{b}}.

\bibitem[Lin et~al.(2017)Lin, Goyal, Girshick, He, and
  Doll{\'a}r]{lin2017focal}
Lin, T.-Y., Goyal, P., Girshick, R., He, K., and Doll{\'a}r, P.
\newblock Focal loss for dense object detection.
\newblock In \emph{ICCV}, pp.\  2980--2988, 2017.

\bibitem[Ma et~al.(2018)Ma, Wang, Houle, Zhou, Erfani, Xia, Wijewickrema, and
  Bailey]{ma2018dimensionality}
Ma, X., Wang, Y., Houle, M.~E., Zhou, S., Erfani, S.~M., Xia, S.-T.,
  Wijewickrema, S., and Bailey, J.
\newblock Dimensionality-driven learning with noisy labels.
\newblock In \emph{ICML}, 2018.

\bibitem[Malach \& Shalev-Shwartz(2017)Malach and
  Shalev-Shwartz]{malach2017decoupling}
Malach, E. and Shalev-Shwartz, S.
\newblock Decoupling" when to update" from" how to update".
\newblock In \emph{NeurIPS}, 2017.

\bibitem[Natarajan et~al.(2013)Natarajan, Dhillon, Ravikumar, and
  Tewari]{natarajan2013learning}
Natarajan, N., Dhillon, I.~S., Ravikumar, P.~K., and Tewari, A.
\newblock Learning with noisy labels.
\newblock In \emph{NeurIPS}, pp.\  1196--1204, 2013.

\bibitem[Patrini et~al.(2017)Patrini, Rozza, Menon, Nock, and
  Qu]{patrini2017making}
Patrini, G., Rozza, A., Menon, A., Nock, R., and Qu, L.
\newblock Making neural networks robust to label noise: a loss correction
  approach.
\newblock In \emph{CVPR}, 2017.

\bibitem[Pereyra et~al.(2017)Pereyra, Tucker, Chorowski, Kaiser, and
  Hinton]{pereyra2017regularizing}
Pereyra, G., Tucker, G., Chorowski, J., Kaiser, L., and Hinton, G.
\newblock Regularizing neural networks by penalizing confident output
  distributions.
\newblock \emph{arXiv preprint arXiv:1701.06548}, 2017.

\bibitem[Reed et~al.(2014)Reed, Lee, Anguelov, Szegedy, Erhan, and
  Rabinovich]{reed2014training}
Reed, S., Lee, H., Anguelov, D., Szegedy, C., Erhan, D., and Rabinovich, A.
\newblock Training deep neural networks on noisy labels with bootstrapping.
\newblock \emph{arXiv preprint arXiv:1412.6596}, 2014.

\bibitem[Sukhbaatar et~al.(2014)Sukhbaatar, Bruna, Paluri, Bourdev, and
  Fergus]{sukhbaatar2014training}
Sukhbaatar, S., Bruna, J., Paluri, M., Bourdev, L., and Fergus, R.
\newblock Training convolutional networks with noisy labels.
\newblock \emph{arXiv preprint arXiv:1406.2080}, 2014.

\bibitem[Szegedy et~al.(2016)Szegedy, Vanhoucke, Ioffe, Shlens, and
  Wojna]{szegedy2016rethinking}
Szegedy, C., Vanhoucke, V., Ioffe, S., Shlens, J., and Wojna, Z.
\newblock Rethinking the inception architecture for computer vision.
\newblock In \emph{CVPR}, 2016.

\bibitem[Tanaka et~al.(2018)Tanaka, Ikami, Yamasaki, and
  Aizawa]{tanaka2018joint}
Tanaka, D., Ikami, D., Yamasaki, T., and Aizawa, K.
\newblock Joint optimization framework for learning with noisy labels.
\newblock In \emph{CVPR}, 2018.

\bibitem[Vahdat(2017)]{vahdat2017toward}
Vahdat, A.
\newblock Toward robustness against label noise in training deep discriminative
  neural networks.
\newblock In \emph{NeurIPS}, 2017.

\bibitem[Veit et~al.(2017)Veit, Alldrin, Chechik, Krasin, Gupta, and
  Belongie]{veit2017learning}
Veit, A., Alldrin, N., Chechik, G., Krasin, I., Gupta, A., and Belongie, S.
\newblock Learning from noisy large-scale datasets with minimal supervision.
\newblock In \emph{CVPR}, 2017.

\bibitem[Wang et~al.(2019{\natexlab{a}})Wang, Hua, Kodirov, and
  Robertson]{wang2019imae}
Wang, X., Hua, Y., Kodirov, E., and Robertson, N.~M.
\newblock Imae for noise-robust learning: Mean absolute error does not treat
  examples equally and gradient magnitude's variance matters.
\newblock \emph{arXiv preprint arXiv:1903.12141}, 2019{\natexlab{a}}.

\bibitem[Wang et~al.(2019{\natexlab{b}})Wang, Kodirov, Hua, and
  Robertson]{wang2019derivative}
Wang, X., Kodirov, E., Hua, Y., and Robertson, N.~M.
\newblock Derivative manipulation for adjusting emphasis density function: A
  general example weighting framework.
\newblock \emph{arXiv preprint arXiv:1905.11233}, 2019{\natexlab{b}}.

\bibitem[Wang et~al.(2018)Wang, Liu, Ma, Bailey, Zha, Song, and
  Xia]{wang2018iterative}
Wang, Y., Liu, W., Ma, X., Bailey, J., Zha, H., Song, L., and Xia, S.-T.
\newblock Iterative learning with open-set noisy labels.
\newblock In \emph{CVPR}, 2018.

\bibitem[Wang et~al.(2019{\natexlab{c}})Wang, Ma, Chen, Luo, Yi, and
  Bailey]{wang2019symmetric}
Wang, Y., Ma, X., Chen, Z., Luo, Y., Yi, J., and Bailey, J.
\newblock Symmetric cross entropy for robust learning with noisy labels.
\newblock In \emph{ICCV}, pp.\  322--330, 2019{\natexlab{c}}.

\bibitem[Xiao et~al.(2015)Xiao, Xia, Yang, Huang, and Wang]{xiao2015learning}
Xiao, T., Xia, T., Yang, Y., Huang, C., and Wang, X.
\newblock Learning from massive noisy labeled data for image classification.
\newblock In \emph{CVPR}, 2015.

\bibitem[Xu et~al.(2019)Xu, Cao, Kong, and Wang]{xu2019l_dmi}
Xu, Y., Cao, P., Kong, Y., and Wang, Y.
\newblock L\_dmi: An information-theoretic noise-robust loss function.
\newblock In \emph{NeurIPS}, 2019.

\bibitem[Yu et~al.(2019)Yu, Han, Yao, Niu, Tsang, and Sugiyama]{yu2019does}
Yu, X., Han, B., Yao, J., Niu, G., Tsang, I., and Sugiyama, M.
\newblock How does disagreement help generalization against label corruption?
\newblock In \emph{ICML}, 2019.

\bibitem[Zhang \& Sabuncu(2018)Zhang and Sabuncu]{zhang2018generalized}
Zhang, Z. and Sabuncu, M.~R.
\newblock Generalized cross entropy loss for training deep neural networks with
  noisy labels.
\newblock In \emph{NeurIPS}, 2018.

\end{thebibliography}
\bibliographystyle{icml2020}


\newpage
\appendix
\onecolumn

\setcounter{lemma}{0}

\section{Proofs for Lemma \ref{lemma_1}, Lemma \ref{lemma_2} and Lemma \ref{lemma_3}}\label{appendix_proof}
Our proofs are inspired by \cite{ghosh2017robust}.

\begin{lemma}
In a multi-class classification problem, any normalized loss function $\L_{\text{norm}}$ is noise tolerant under symmetric (or uniform) label noise, if noise rate $\eta < \frac{K-1}{K}$.
\end{lemma}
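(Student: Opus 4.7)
The plan is to exploit the defining property of the normalization in Eq.~\eqref{eq:normalized_loss}, namely that $\sum_{j=1}^{K} \L_{\text{norm}}(f(\xx), j) = 1$ for every $\xx$ and every classifier $f$. This constant-sum property is exactly the hypothesis required by the Ghosh--Kumar--Sastry framework, so the proof will amount to adapting their symmetric-noise argument with the constant $C=1$.

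First, I would set up notation by writing the clean risk as $R(f) = \E_{\xx, y^*}[\L_{\text{norm}}(f(\xx), y^*)]$ and the noisy risk as $R^{\eta}(f) = \E_{\xx, y}[\L_{\text{norm}}(f(\xx), y)]$. Using the symmetric noise model where $y = y^*$ with probability $1-\eta$ and $y = k \neq y^*$ with probability $\eta/(K-1)$, I would condition on $(\xx, y^*)$ and expand the expectation over $y$ to obtain
\begin{equation*}
R^{\eta}(f) = \E_{\xx, y^*}\!\left[(1-\eta)\,\L_{\text{norm}}(f(\xx), y^*) + \frac{\eta}{K-1}\sum_{k \neq y^*}\L_{\text{norm}}(f(\xx), k)\right].
\end{equation*}

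Next, I would use the normalization identity $\sum_{k=1}^{K} \L_{\text{norm}}(f(\xx), k) = 1$ to rewrite the inner sum as $1 - \L_{\text{norm}}(f(\xx), y^*)$. Collecting terms gives the affine relation
\begin{equation*}
R^{\eta}(f) = \left(1 - \frac{\eta K}{K-1}\right) R(f) + \frac{\eta}{K-1}.
\end{equation*}
Under the assumption $\eta < \frac{K-1}{K}$, the coefficient $1 - \eta K/(K-1)$ is strictly positive, so $R^{\eta}(f) - R^{\eta}(f^*) = (1 - \eta K/(K-1))(R(f) - R(f^*)) \geq 0$ for every $f$. Therefore $f^*$, the global minimizer of $R$, is also a global minimizer of $R^{\eta}$, which is the definition of noise tolerance.

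I do not anticipate a serious obstacle here: the whole argument hinges on the single algebraic identity $\sum_{j}\L_{\text{norm}}(f(\xx),j)=1$ that is built into the normalization, and the rest is an elementary rearrangement. The only subtlety worth being explicit about is that the ``noise tolerance'' conclusion is a statement about \emph{global} minimizers (following \cite{ghosh2017robust}), not about the expressive power of the hypothesis class; in particular the argument makes no claim that $R(f^*)=0$, which is why this lemma --- unlike Lemma~\ref{lemma_2} --- does not need that restrictive assumption.
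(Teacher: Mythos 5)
Your proposal is correct and follows essentially the same route as the paper's own proof: expand the noisy risk under the symmetric noise model, apply the normalization identity $\sum_{k=1}^{K}\L_{\text{norm}}(f(\xx),k)=1$ to obtain the affine relation $R^{\eta}(f)=\bigl(1-\tfrac{\eta K}{K-1}\bigr)R(f)+\tfrac{\eta}{K-1}$, and conclude that the positive coefficient under $\eta<\tfrac{K-1}{K}$ preserves the global minimizer. Your closing remark that no $R(f^*)=0$ assumption is needed here (unlike the asymmetric case) is also consistent with the paper.
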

\begin{proof}
For symmetric label noise, the noise risk can be defined as:
\begin{align*}
	\small
	R^\eta(f) & =  \E_{\xx, \hat{y}} \L_{norm}(f(\xx), \hat{y}) =  \E_{\xx} \E_{y | \xx} \E_{\hat{y} | \xx, y} \L_{norm}(f(\xx), \hat{y}) \\
		&= \E_{\xx} \E_{y | \xx} \Big[ (1-\eta) \L_{norm}(f(\xx), y) + \frac{\eta}{K-1} \sum_{k\neq y} \L_{norm}(f(\xx), k) \Big] \\
		& =  (1 - \eta) R(f) +  \frac{\eta}{K-1} \bigg(\E_{\xx, y}\bigg[\sum_{k=1}^{K}\L_{norm}(f(\xx), k)\bigg] - R(f)\bigg)\\
		& = R(f)\left(1-\frac{\eta K}{K-1}\right) + \frac{\eta}{K-1},
\end{align*}
where the last equality holds due to $\sum_{k=1}^{K}\L_{norm}(f(\xx), k) = 1$, following Eq. \eqref{eq:normalized_loss}. Thus,
	\[R^\eta(f^*)-R^\eta(f)=(1-\frac{\eta K}{K-1})(R(f^*)-R(f)) \leq 0,\]
because $\eta < \frac{K-1}{K}$ and $f^*$ is a global minimizer of $R(f)$. This proves $f^*$ is also the global minimizer of risk $R^\eta(f)$, that is, $\L_{norm}$ is noise tolerant to symmetric label noise. 
\end{proof}

\begin{lemma}
In a multi-class classification problem, given $R(f^{*})=0$ and $0 \leq \L_{\text{norm}}(f^{*}(\xx), k) \leq \frac{1}{K-1}$, any normalized loss function $\L_{\text{norm}}$ is noise tolerant under asymmetric (or class-conditional) label noise, if noise rate $\eta_{jk} < 1- \eta_y$.
\end{lemma}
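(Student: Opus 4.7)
\section*{Proof plan for Lemma \ref{lemma_2}}

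The plan is to mimic the structure of the symmetric-noise proof but exploit the two extra hypotheses ($R(f^{*})=0$ and the uniform upper bound $\L_{\text{norm}}\le 1/(K-1)$) to compensate for the fact that, under asymmetric noise, the per-class flipping probabilities $\eta_{jk}$ no longer sum nicely into a single constant. First I would expand the noisy risk by conditioning on the true label $y^{*}$:
\begin{equation*}
R^{\eta}(f)=\E_{\xx,y^{*}}\!\left[(1-\eta_{y^{*}})\,\L_{\text{norm}}(f(\xx),y^{*})+\sum_{k\neq y^{*}}\eta_{y^{*}k}\,\L_{\text{norm}}(f(\xx),k)\right].
\end{equation*}
The symmetric-noise proof reduced to a single global constant by using $\sum_{k}\L_{\text{norm}}(f(\xx),k)=1$; here I would also use this identity, but only to rewrite $\L_{\text{norm}}(f(\xx),y^{*})=1-\sum_{k\neq y^{*}}\L_{\text{norm}}(f(\xx),k)$ so that the whole integrand becomes a linear combination of the off-label losses $\ell_k:=\L_{\text{norm}}(f(\xx),k)$, $k\neq y^{*}$.

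Next I would use the hypothesis $R(f^{*})=0$. Since $\L_{\text{norm}}\ge 0$, this forces $\L_{\text{norm}}(f^{*}(\xx),y^{*})=0$ almost everywhere, hence $\sum_{k\neq y^{*}}\L_{\text{norm}}(f^{*}(\xx),k)=1$. Combined with the uniform upper bound $\L_{\text{norm}}(f^{*}(\xx),k)\le 1/(K-1)$, the $K-1$ off-label losses must each equal exactly $1/(K-1)$. Plugging this into the expansion for $f^{*}$ gives an explicit closed form for $R^{\eta}(f^{*})$ that depends only on $\eta_{y^{*}}$, which is the key handle.

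Then I would form the difference $R^{\eta}(f)-R^{\eta}(f^{*})$. After substituting $\ell_{y^{*}}=1-\sum_{k\neq y^{*}}\ell_k$ it takes the shape
\begin{equation*}
\E_{\xx,y^{*}}\!\left[(1-\eta_{y^{*}})+\sum_{k\neq y^{*}}\bigl(\eta_{y^{*}k}-(1-\eta_{y^{*}})\bigr)\ell_k-\sum_{k\neq y^{*}}\frac{\eta_{y^{*}k}}{K-1}\right].
\end{equation*}
The noise-rate hypothesis $\eta_{jk}<1-\eta_{y}$ makes every coefficient of $\ell_k$ strictly negative, and the uniform bound $\ell_k\le 1/(K-1)$ then yields the correct-sign inequality: replacing $\ell_k$ by its upper bound $1/(K-1)$ gives a lower bound on the bracket, and a direct computation shows that lower bound collapses to $0$. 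Hence $R^{\eta}(f)\ge R^{\eta}(f^{*})$ for every $f$, so $f^{*}$ is a global minimizer of the noisy risk and $\L_{\text{norm}}$ is noise tolerant.

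The main obstacle I anticipate is keeping the direction of the inequalities straight: the bound $\ell_k\le 1/(K-1)$ is an upper bound, but because the coefficient $\eta_{y^{*}k}-(1-\eta_{y^{*}})$ is negative it yields a \emph{lower} bound on the summand, which is precisely what is needed for the final inequality. The other delicate point is that the constant terms $(1-\eta_{y^{*}})$ and $\sum_{k\neq y^{*}}\eta_{y^{*}k}/(K-1)=\eta_{y^{*}}/(K-1)$ do not cancel on their own; cancellation only occurs once the bracket from the off-label losses (which contributes $-(K-1)(1-\eta_{y^{*}})/(K-1)=-(1-\eta_{y^{*}})$ at the bound) is included. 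This bookkeeping is the crux of the argument.
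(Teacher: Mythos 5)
Your proposal is correct and follows essentially the same route as the paper: the same expansion of the noisy risk using $\sum_{k}\L_{\text{norm}}(f(\xx),k)=1$, the same use of $R(f^{*})=0$ together with the $\tfrac{1}{K-1}$ bound to pin every off-label loss of $f^{*}$ at exactly $\tfrac{1}{K-1}$, and the same sign argument from $\eta_{yk}<1-\eta_{y}$. The only cosmetic difference is that you establish $R^{\eta}(f)\geq R^{\eta}(f^{*})$ directly for every $f$, whereas the paper phrases the last step as a comparison with the noisy minimizer $f^{*}_{\eta}$; note that, exactly like the paper's own proof, your argument needs the $\tfrac{1}{K-1}$ bound to hold for the competitor classifier (as in the main-text statement of the lemma), not only for $f^{*}$.
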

\begin{proof}
For asymmetric or class-conditional noise, $1-\eta_y$ is the probability of a label being correct (\textit{i.e.,} $k=y$), and the noise condition $\eta_{y k} < 1-\eta_{y}$ generally states that a sample $\xx$ still has the highest probability of being in the correct class $y$, though it has probability of $\eta_{y k}$ being in an arbitrary noisy (incorrect) class $k \neq y$.
Considering the noise transition matrix between classes $[\eta_{i j}], \forall i,j \in \{1,2, \cdots, K\}$, this condition only requires that the matrix is diagonal dominated by $\eta_{i i}$ (\textit{i.e.,} the correct class probability $1 - \eta_{y}$). Following the symmetric case, here we have,
	\small{
	\begin{align}
	\label{eq:cc_1}
	\begin{split}
	R^\eta(f) & = \E_{\xx, \hat{y}} \L_{norm}(f(\xx), \hat{y}) =  \E_{\xx} \E_{y | \xx} \E_{\hat{y} | \xx, y} \L_{norm}(f(\xx), \hat{y}) \\ 
	& = \E_{\xx} \E_{y | \xx} \Big[ (1- \eta_{y}) \L_{norm}(f(\xx), y) +   \sum_{k \neq y} \eta_{y k} \L_{norm} (f(\xx), k) \Big] \\
	& = \E_{\xx, y} \Big[ (1-\eta_{y})\Big(\sum_{k=1}^{K}\L_{norm}(f(\xx), k) - \sum_{k \neq y} \L_{norm}(f(\xx), k)\Big)\Big] + \E_{\xx, y} \Big[\sum_{k \neq y} \eta_{y k} \L_{norm}(f(\xx), k)\Big] \\
	& = \E_{\xx, y} \Big[ (1-\eta_{y})\big(1 - \sum_{k \neq y} \L_{norm}(f(\xx), k)\big)\Big] + \E_{\xx, y}\Big[ \sum_{k \neq y} \eta_{y k} \L_{norm}(f(\xx), k)\Big] \\
	& = \E_{\xx, y} (1-\eta_{y})-\E_{\xx, y} \Big[\sum_{k \neq y}(1-\eta_{y}-\eta_{y k}) \L_{norm}(f(\xx), k)\Big].
	\end{split}
	\end{align}	}
	As $f^{\ast}_{\eta}$ is the minimizer of $R^\eta(f)$, $R^{\eta}(f_{\eta}^{\ast})-R^{\eta}(f^{\ast}) \leq  0$. So, from \ref{eq:cc_1} above, we have,
	\begin{align}
	\label{eq:cc_2}
	\E_{\xx, y}\Big[\sum_{k\neq y}(1-\eta_{y}-\eta_{y k})\big(\underbrace{\L_{norm} (f^{\ast}(\xx),k)}_{\L_{norm}^{*}}-\underbrace{\L_{norm}(f^{\ast}_{\eta}(\xx),k)}_{\L_{norm}^{\eta *}}\big)\Big] \leq 0.
	\end{align}
	Next, we prove, $f^{\ast}_{\eta}=f^{\ast}$ holds following Eq. \eqref{eq:cc_2}. First, $(1-\eta_{y}-\eta_{y k})>0$ as per the assumption that $\eta_{y k} < 1-\eta_{y}$. Thus, $\L_{norm}^{*} - \L_{norm}^{\eta *} \leq 0$ for Eq. \eqref{eq:cc_2} to hold.
	Since we are given $R(f^*)=0$, we have $\L(f^*(\xx), y) = 0$. 
	Thus, following the definition of $\L_{norm}$ in Eq. \eqref{eq:normalized_loss} and assumption $\L_{\text{norm}}(f^{*}(\xx), k) \leq \frac{1}{K-1}$, we have $\L_{norm}(f^*(\xx),k)= \frac{\L(f^*(\xx)=0, k)}{\sum_{j}^{K}\L(f^*(\xx), j)} = \frac{1}{K-1}$, for all $k\neq y$. Also, we have $\L_{norm}(f_\eta^*(\xx) , k) =\frac{\L(f_\eta^*(\xx),k)}{\sum_{j}^{K}\L(f_\eta^*(\xx), j)} \leq \frac{1}{K-1}$, $\forall k\neq y$. Thus, for Eq. \eqref{eq:cc_2} to hold (\textit{e.g.} $\L_{norm}(f_\eta^*(\xx),k) \geq \L_{norm}(f^*(\xx),k)$), it must be the case that $p_k=0,\;\forall k\neq y$, that is, $\L_{norm}(f_\eta^*(\xx),k) = \L_{norm}(f^*(\xx),k)$ for all $k \in \{1,2, \cdots, K\}$, thus $f^{\ast}_{\eta}=f^{\ast}$ which completes the proof.
\end{proof}

\begin{lemma}
$\forall \alpha, \forall \beta$, if $\L_{\text{Active}}$ and $\L_{\text{Passive}}$ are noise tolerant, then $\L_{\text{APL}} = \alpha \cdot \L_{\text{Active}} + \beta \cdot \L_{\text{Passive}}$ is noise tolerant.
\end{lemma}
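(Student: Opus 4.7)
The plan is to reduce the claim to linearity of expectation on top of the noise-tolerance guarantees already proved in Lemma 1 and Lemma 2. Define the clean and noisy risks componentwise: $R_A(f) = \E_{\xx,y^*}\L_{\text{Active}}$, $R_A^\eta(f) = \E_{\xx,y}\L_{\text{Active}}$, and analogously $R_P, R_P^\eta$ for the passive loss. Since expectation is linear, for every $f$,
\begin{equation*}
R_{\text{APL}}(f) = \alpha R_A(f) + \beta R_P(f), \qquad R_{\text{APL}}^\eta(f) = \alpha R_A^\eta(f) + \beta R_P^\eta(f).
\end{equation*}
So the whole task reduces to transporting the ``minimizer is preserved'' property from each summand to the sum, and the fact that $\alpha,\beta>0$ is what makes this go through.

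For the symmetric case I would reuse the explicit identity obtained inside the proof of Lemma 1: for any loss whose per-sample values sum to a constant, $R^\eta(f) = (1-\tfrac{\eta K}{K-1})\,R(f) + \tfrac{\eta}{K-1}\cdot C$. Applying this to $\L_{\text{Active}}$ and $\L_{\text{Passive}}$ separately and then combining by linearity yields
\begin{equation*}
R_{\text{APL}}^\eta(f) = \Bigl(1-\tfrac{\eta K}{K-1}\Bigr) R_{\text{APL}}(f) + \tfrac{\eta}{K-1}\bigl(\alpha C_A + \beta C_P\bigr),
\end{equation*}
an affine function of $R_{\text{APL}}(f)$ with a strictly positive slope whenever $\eta < (K-1)/K$. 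Hence any global minimizer $f^*$ of the clean APL risk is also a global minimizer of $R_{\text{APL}}^\eta$, and $R_{\text{APL}}^\eta(f^*) - R_{\text{APL}}^\eta(f) \le 0$ for every $f$.

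For the asymmetric case I would redo the chain of equalities from the proof of Lemma 2, but with the linearly combined loss. Let $f^*$ be a global minimizer of the clean APL risk; under the hypothesis $R(f^*)=0$ applied to each component (so $\L_A(f^*(\xx),y)=\L_P(f^*(\xx),y)=0$), write
\begin{equation*}
R_{\text{APL}}^\eta(f_\eta^*) - R_{\text{APL}}^\eta(f^*) = \alpha\bigl(R_A^\eta(f_\eta^*)-R_A^\eta(f^*)\bigr) + \beta\bigl(R_P^\eta(f_\eta^*)-R_P^\eta(f^*)\bigr),
\end{equation*}
and repeat the expansion around equation \eqref{eq:cc_1} in the appendix; the factor $(1-\eta_y-\eta_{yk})>0$ and the assumed bound $\L_{\text{norm}} \le 1/(K-1)$ carry over unchanged, and $\alpha,\beta>0$ preserves the non-positivity, forcing $f^*_\eta = f^*$.

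The main obstacle I foresee is being careful about which notion of ``noise tolerant'' is being imported from the hypothesis. The two previous lemmas do not merely assert that some minimizer is preserved; they rely on a sharper structural property (an affine relation in Lemma 1, and a pointwise inequality governed by $\L_{\text{norm}} \le 1/(K-1)$ in Lemma 2). The cleanest write-up will cite those structural properties of $\L_{\text{Active}}$ and $\L_{\text{Passive}}$ and then invoke linearity, rather than treating ``noise tolerant'' as an opaque black box, because a black-box version would not automatically ensure that the minimizers of the two summands coincide with the minimizer of the sum.
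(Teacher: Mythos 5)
Your proposal is correct and follows essentially the same route as the paper: both arguments hinge on the fact that the per-class sums of $\L_{\text{Active}}$ and $\L_{\text{Passive}}$ are constants, so $\L_{\text{APL}}$ has constant per-class sum $\alpha C_{\text{Active}} + \beta C_{\text{Passive}}$, yielding the same affine relation $R^\eta_{\text{APL}}(f) = \bigl(1-\tfrac{\eta K}{K-1}\bigr)R_{\text{APL}}(f) + \tfrac{\eta(\alpha C_{\text{Active}} + \beta C_{\text{Passive}})}{K-1}$ for symmetric noise and the same re-run of the Lemma 2 expansion for asymmetric noise. Your closing caveat, that the hypothesis must be used via this structural constant-sum property rather than as a black-box ``noise tolerant'' assumption, is exactly what the paper's proof implicitly does.
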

\begin{proof}
Let $\alpha, \beta \in \mathbb{R}$, then
$\sum_{j}^{K} \L_{\text{APL}}(f(\xx), j) =  \alpha \cdot \sum_{j}^{K} \L_{\text{Active}}(f(\xx), j) + \beta \cdot \sum_{j}^{K} \L_{\text{Passive}}(f(\xx), j) = \alpha \cdot C_{\text{Active}} + \beta \cdot C_{\text{Passive}} = C$.
Following our proof for Lemma \ref{lemma_1}, for symmetric noise, we have,
\begin{align*}
	\small
	R^\eta(f) & = R(f)\left(1-\frac{\eta K}{K-1}\right) + \frac{(\alpha \cdot C_{\text{Active}} + \beta \cdot C_{\text{Passive}})\eta}{K-1}.
\end{align*}
Thus, $R^\eta(f^*)-R^\eta(f)=(1-\frac{\eta K}{K-1})(R(f^*)-R(f)) \leq 0$.
Given $\eta < \frac{K-1}{K}$ and $f^*$ is a global minimizer of $R(f)$, $R(f^*)-R(f)$, that is, $f^*$ is also the global minimizer of risk $R^\eta(f)$. Thus, $\L_{\text{APL}}$ is noise tolerant to symmetric label noise. 

Following our proof for Lemma \ref{lemma_2}, for asymmetric noise, we have,

\small{
	\begin{align}
	\label{eq:cc_3}
	\begin{split}
	R^\eta(f) & = (\alpha \cdot C_{\text{Active}} + \beta \cdot C_{\text{Passive}}) \E_{\xx, y} (1-\eta_{y})-\E_{\xx, y} \Big[\sum_{k \neq y}(1-\eta_{y}-\eta_{y k}) \L_{norm}(f(\xx), k)\Big].
	\end{split}
	\end{align}	}
	
As $f^{\ast}_{\eta}$ is the minimizer of $R^\eta(f)$, $R^{\eta}(f_{\eta}^{\ast})-R^{\eta}(f^{\ast}) \leq  0$. So, from \ref{eq:cc_3} above, we can derive the same equation as Eq. \eqref{eq:cc_2},
	\begin{align}
	\label{eq:cc_4}
	\E_{\xx, y}\Big[\sum_{k\neq y}(1-\eta_{y}-\eta_{y k})\big(\underbrace{\L_{\text{APL}} (f^{\ast}(\xx),k)}_{\L_{\text{APL}}^{*}}-\underbrace{\L_{\text{APL}}(f^{\ast}_{\eta}(\xx),k)}_{\L_{\text{APL}}^{\eta *}}\big)\Big] \leq 0.
	\end{align}
 Thus, we can follow the same proof from Eq. \eqref{eq:cc_2}, to $f^{*}_{\eta} = f^{*}$, that is, $\L_{\text{APL}}$ is also noise tolerant to asymmetric noise.
\end{proof}




\end{document}